\documentclass[journal]{IEEEtran}
\usepackage{graphicx}
\usepackage{amsmath,amsfonts}
\usepackage{amssymb}
\usepackage{amsthm}
\usepackage{booktabs}
\usepackage{multirow}
\usepackage{algorithmic}
\usepackage{algorithm}
\usepackage{array}
\usepackage[caption=false,font=footnotesize]{subfig}
\usepackage{textcomp}
\usepackage{stfloats}
\usepackage{url}
\usepackage{verbatim}
\usepackage{cite}
\usepackage{color}
\usepackage{xspace}
\usepackage{mathtools}
\usepackage{orcidlink}
\newtheorem{assumption}{Assumption}
\newtheorem{property}{Property}
\newtheorem{proposition}{Proposition}

\DeclareMathOperator{\EX}{\mathbb{E}}
\newcommand{\etal}{\textit{et al.}}
\newcommand{\ourmethod}{\textit{CARPM-FIQA}\xspace}

\usepackage[T1]{fontenc}
\usepackage{newtxtext}
\usepackage{newtxmath}
\usepackage{placeins}

\begin{document}

\title{Learning Steadily: Accumulating Relative Point Margin Scores for Face Image Quality Assessment}

\author{
Guray Ozgur\,\orcidlink{0000-0002-1966-6641},
Tahar Chettaoui\,\orcidlink{0009-0004-9744-7235},
Eduarda Caldeira\,\orcidlink{0009-0002-4891-0057},
Marco Huber\,\orcidlink{0000-0003-3413-6291},
Jan Niklas Kolf\,\orcidlink{0000-0002-0037-5334},\\
Naser Damer\,\orcidlink{0000-0001-7910-7895}, \IEEEmembership{Senior Member, IEEE},
and Fadi Boutros\,\orcidlink{0000-0003-4516-9128}

\thanks{This research work has been funded by the German Federal Ministry of Education and Research and the Hessen State Ministry for Higher Education, Research and the Arts within their joint support of the National Research Center for Applied Cybersecurity ATHENE.}
\thanks{Guray Ozgur, Tahar Chettaoui, Eduarda Caldeira, Marco Huber, Jan Niklas Kolf, Naser Damer, and Fadi Boutros are with Fraunhofer Institute for Computer Graphics Research IGD, Fraunhoferstr. 5, 64283 Darmstadt, Germany.}
\thanks{Guray Ozgur, Tahar Chettaoui, Eduarda Caldeira, Marco Huber, Jan Niklas Kolf, and Naser Damer are also with the Department of Computer Science at the TU Darmstadt, Karolinenpl. 5, 64289 Darmstadt, Germany.}
\thanks{Accepted for publication in IEEE Transactions on Biometrics,
Behavior, and Identity Science. Digital Object Identifier (DOI):
10.1109/TBIOM.2026.3734769.}

\thanks{\copyright\ 2026 IEEE. Personal use of this material is
permitted. Permission from IEEE must be obtained for all other uses,
in any current or future media, including reprinting/republishing this
material for advertising or promotional purposes, creating new
collective works, for resale or redistribution to servers or lists,
or reuse of any copyrighted component of this work in other works.}
}

\maketitle
\begin{abstract}
Face Image Quality Assessment (FIQA) determines the suitability of captured face images for automated face recognition (FR), a critical capability for reliable biometric systems. Existing state-of-the-art (SOTA) FR-integrated FIQA methods suffer from temporal instability: as the feature space evolves during training, single-epoch quality estimates fluctuate, creating a moving target that undermines reliable quality prediction. We introduce \ourmethod, a stabilization strategy for FR-integrated FIQA that accumulates relative point margin measurements, the ratio between intra-class compactness and inter-class separation, across the entire training trajectory rather than relying on single-epoch estimates. This cumulative averaging approach provides theoretically grounded advantages: reduced variance in quality estimates, improved mean squared error, and enhanced ranking stability with convergence guarantees as training progresses. Through controlled experiments on the SynFIQA dataset with labeled quality groups, we demonstrate that cumulative averaging achieves superior discriminative ability, and ablation studies across different training configurations confirm consistent improvements. Evaluated against twelve FIQA methods on eight challenging benchmarks with four FR models at two FMR thresholds, \ourmethod places 4th (\ourmethod(L)) and 6th (\ourmethod(S)) of 17 compared methods by pAUC-EDC and AUC-EDC averaged across FR models and, after per-benchmark normalization, across benchmarks, staying within a few percent of the best method's normalized average for every FR model, providing a principled solution to training instability while maintaining the performance benefits of FR integration. More broadly, our work demonstrates that temporal aggregation strategies can stabilize training objectives in deep learning systems where target values inherently fluctuate due to evolving feature representations.
\end{abstract}

\begin{IEEEkeywords}
Biometrics, face image quality assessment, face recognition, training dynamics. 
\end{IEEEkeywords}

\begin{figure}[t]
    \centering
    \includegraphics[width=\linewidth]{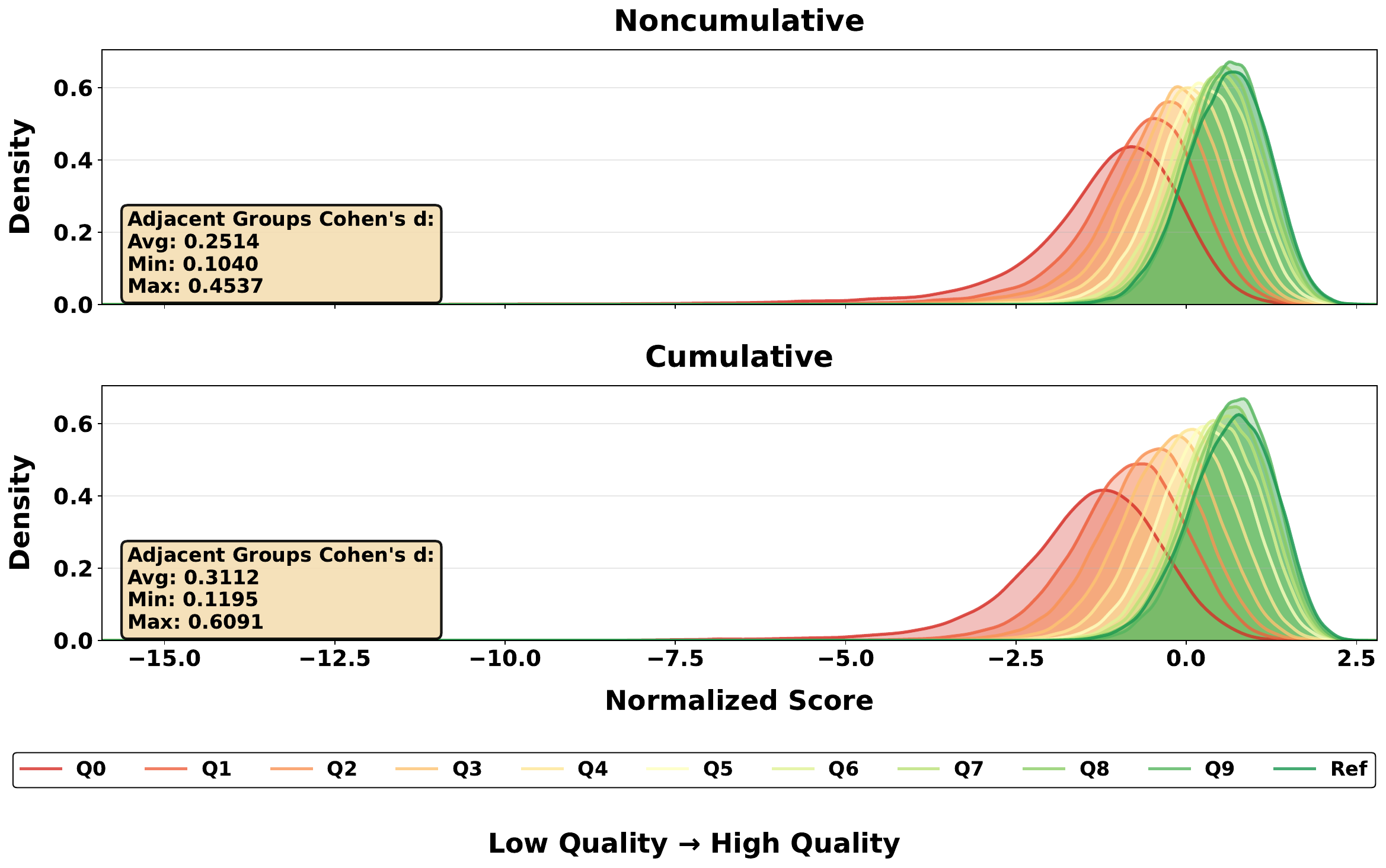}
    \caption{Normalized quality score distributions for 11 quality groups, each having 0.5M images, from 5.5M images of SynFIQA \cite{mrfiqa}. (Q0-Q9: degraded images from lowest to highest quality; Ref: reference images) comparing non-cumulative (top) and cumulative (bottom) methods. Scores are z-score normalized within each method. Color gradient from red to green indicates quality progression. Cohen's d effect sizes between adjacent groups are reported. The cumulative method achieves higher average Cohen's d (0.3112 vs.\ 0.2514) on the small protocol (ArcFace/ResNet50/CASIA-WebFace), demonstrating a better group separation.
    }
    \label{fig:normalized_distributions_casia}
    \vspace{-6mm}
\end{figure}

\section{Introduction}
\label{sec:intro}

\IEEEPARstart{F}{IQA} evaluates face image utility for face recognition processing, measuring \textit{recognition utility} or \textit{suitability for identity verification} \cite{Quality_ISO,NISTQuaity}. Unlike general Image Quality Assessment (IQA) methods that assess quality from human perception \cite{BRISQE_IQA,nique,liu2017rankiqa}, FIQA specifically quantifies how effectively a facial image serves automated recognition tasks such as deciding whether images from passport scans or live captures contain sufficient biometric information for accurate identity matching \cite{NISTQuaity,DBLP:journals/csur/SchlettRHGFB22}. As demonstrated in \cite{BiyingWACV}, high perceived quality does not always correlate with FR utility, particularly when factors like facial occlusions are present. This distinction explains why FIQA approaches consistently outperform general IQA methods for FR applications \cite{boutros_2023_crfiqa,MagFace,SDDFIQA,DBLP:conf/iwbf/BabnikDS23}. Current SOTA FR-integrated FIQA approaches \cite{boutros_2023_crfiqa,MagFace} tightly couple quality assessment with FR training, leading to superior performance on challenging benchmarks \cite{yang2025fate}. However, these methods can suffer from unstable quality estimates due to the dynamic nature of the training process. As the feature space evolves during training, the induced clustering structure changes, leading to fluctuations in quality estimates across training iterations. This instability makes it difficult to consistently identify high-quality face images that remain distinguishable throughout training, creating a moving target for quality assessment. Consider a sample that appears high-quality at epoch $t$ based on its position in the embedding space, this same sample may be classified as medium-quality at epoch $t+k$ as class boundaries shift and competing identities reorganize. Such temporal inconsistency undermines the reliability of quality predictions, particularly for samples near decision boundaries.

To address this fundamental challenge, we propose \ourmethod (\textit{\textbf{C}umulative \textbf{A}verage of \textbf{R}elative \textbf{P}oint \textbf{M}argin for \textbf{FIQA}}), which stabilizes quality estimates by accumulating measurements across the training trajectory. \ourmethod is a temporal stabilization strategy for FR-integrated FIQA: it deliberately adopts the established relative point margin/classifiability score of prior work (Sec. \ref{subsec:related_metrics}) as its per-epoch signal, and contributes the principled temporal stabilization this family of methods has lacked, transforming an inherently fluctuating single-epoch estimate into a provably stable training target with formal variance-reduction, MSE, and ranking-convergence guarantees. Our approach makes three key contributions: (1) A quality stabilization approach that accumulates relative point margin values, measuring the ratio between intra-class compactness and inter-class separation, across training epochs, providing temporally-stable quality estimates that capture long-term discriminative properties of face images. (2) A theoretical framework demonstrating that cumulative averaging reduces variance, improves mean squared error, and enhances ranking stability, with formal proofs and empirical validation. (3) A regression model that effectively predicts accumulated quality scores for unseen face images, generalizing the temporal stability patterns observed during training to new samples without requiring access to their training history.

Through extensive evaluation on eight challenging benchmarks (LFW \cite{LFWTech}, AgeDB-30 \cite{agedb}, CFP-FP \cite{cfp-fp}, CALFW \cite{CALFW}, Adience \cite{Adience}, CPLFW \cite{CPLFWTech}, XQLFW \cite{XQLFW}, and IJB-C \cite{ijbc}) across six SOTA FR models (ArcFace \cite{deng2019arcface}, ElasticFace \cite{elasticface}, MagFace \cite{MagFace}, CurricularFace \cite{curricularFace}, TransFace \cite{transface}, SwinFace \cite{swinface}), we demonstrate that \ourmethod consistently ranks among the top-performing methods, effectively addressing the instability issues present in existing FR-integrated FIQA approaches while maintaining the advantage of tight coupling with the FR task. \vspace{-2mm}

\begin{figure}[t]
    \centering
    \setlength{\tabcolsep}{1pt}
    \renewcommand{\arraystretch}{1.1}
    \includegraphics[width=0.9\linewidth]{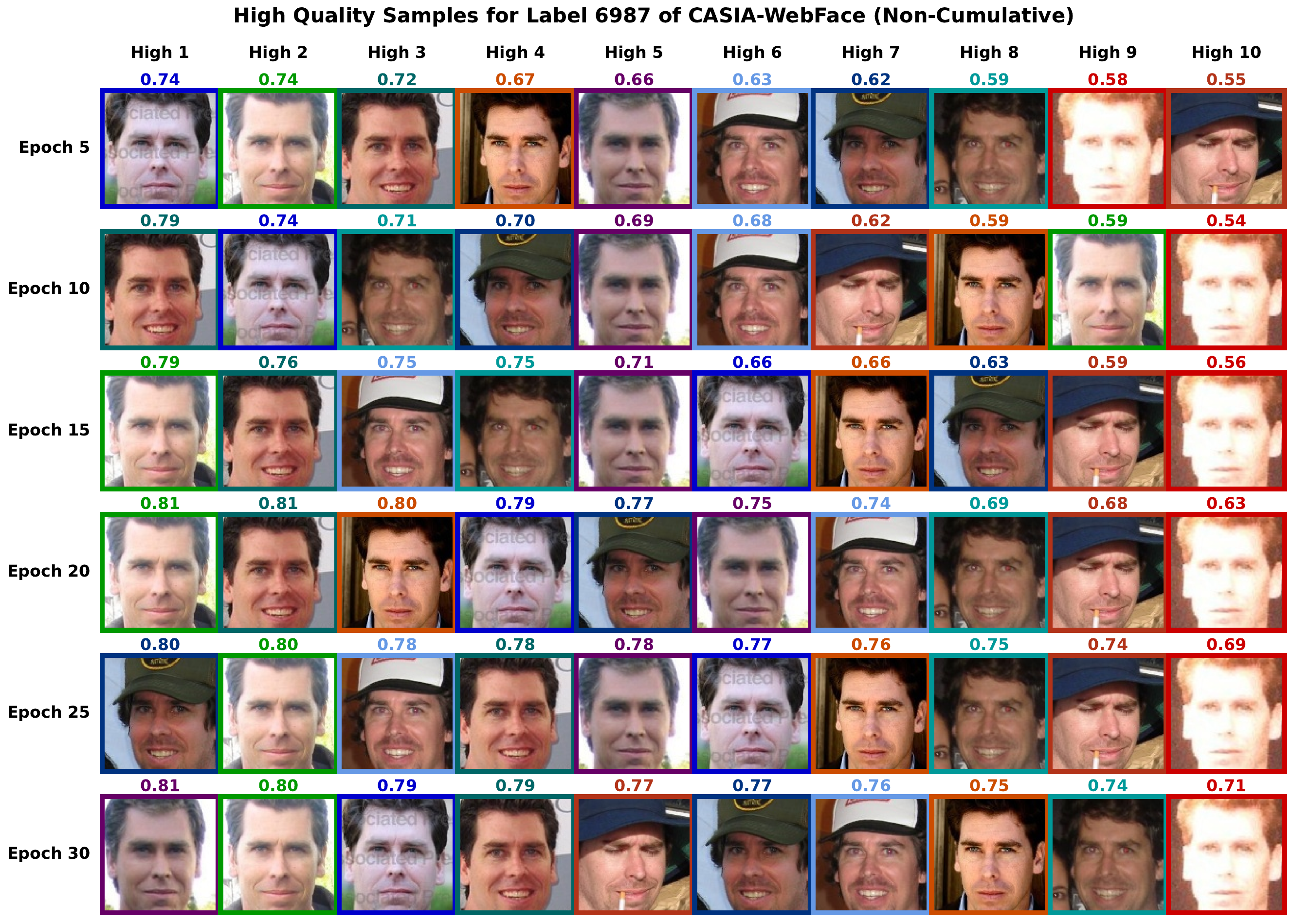}\\
    \includegraphics[width=0.9\linewidth]{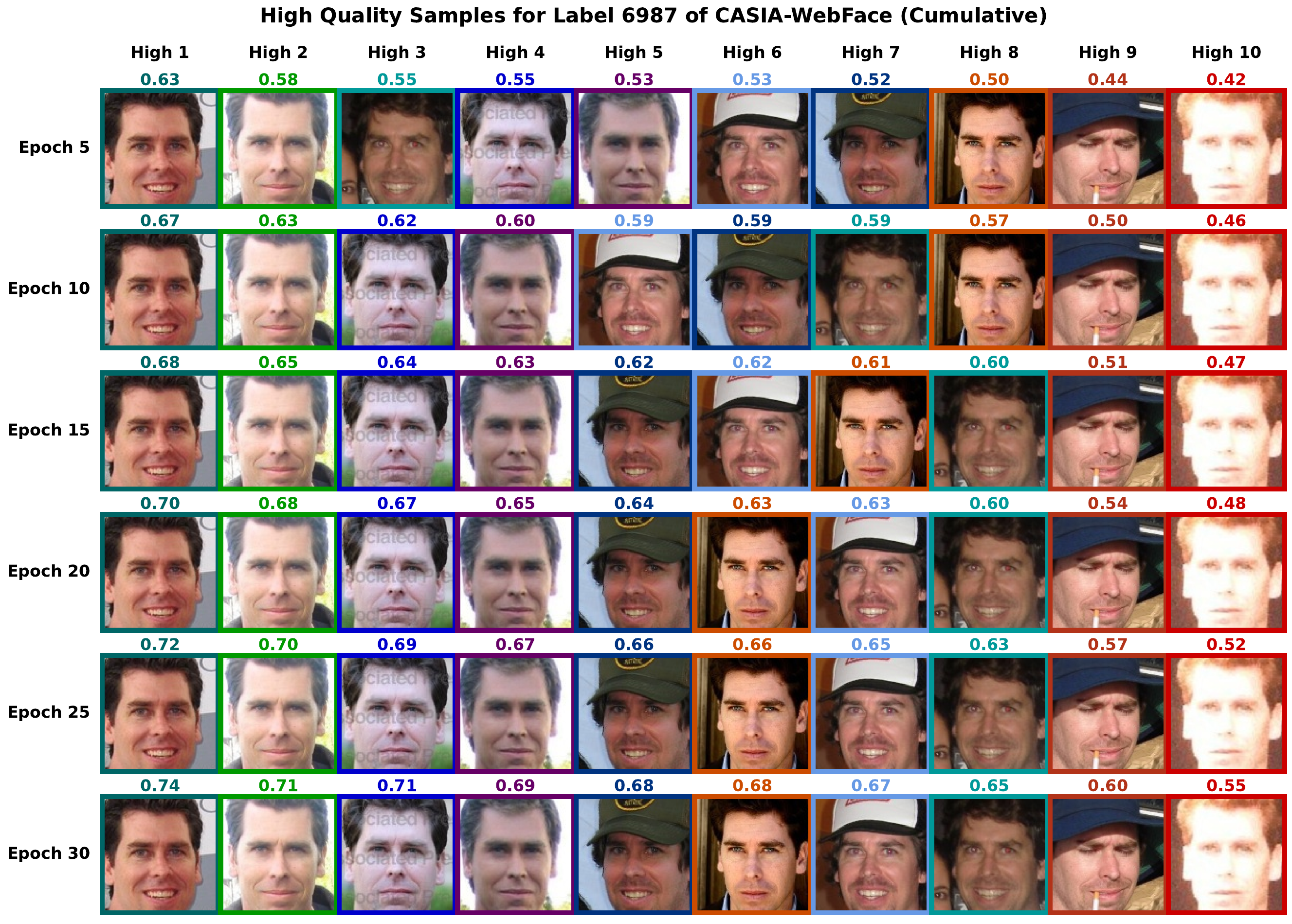}
    \caption{High quality samples from identity \#6987 across training epochs. Note how the top-ranked images in the non-cumulative approach (top) tend to stabilize in the cumulative approach (bottom), showing the benefit of averaging quality scores over time.}
    \label{fig:high_quality_id6987}
    \vspace{-5mm}
\end{figure}

\section{Related Work}
\label{sec:related_work}

\subsection{Face Recognition Models and Training}
\label{subsec:related_fr}

Deep learning has revolutionized FR through CNNs \cite{deng2019arcface,DBLP:conf/cvpr/WangWZJGZL018,DBLP:conf/cvpr/LiuWYLRS17} and more recently through Vision Transformers (ViTs) \cite{transface,DBLP:conf/cvpr/KimS0JL24,DBLP:journals/ivc/ChettaouiDB25}. The key to FR success lies in loss functions that explicitly encourage intra-class compactness and inter-class separation. The Angular Margin Penalty-based Softmax Loss family \cite{deng2019arcface,curricularFace,MagFace,elasticface,DBLP:conf/cvpr/WangWZJGZL018} achieves this by introducing angular margin penalties between deep features and their corresponding class centers. ArcFace \cite{deng2019arcface} adds an angular margin in the arc-cosine space, CosFace \cite{DBLP:conf/cvpr/WangWZJGZL018} applies a cosine margin, while CurricularFace \cite{curricularFace} adaptively emphasizes hard samples. These margin-based losses induce a feature space where samples cluster tightly around their identity centers while maintaining large margins from competing identities, a structure that directly influences the stability and reliability of quality estimates derived during training. \vspace{-2mm}

\subsection{Quality Metrics and Class Separability}
\label{subsec:related_metrics}

The silhouette score \cite{ROUSSEEUW198753silhouette} provides a foundational framework for evaluating clustering quality \cite{lovmar2005silhouette, januzaj2023silhouette, ogbuabor2018silhouette, shahapure2020silhouette}. It quantifies how well a data point is assigned to its cluster by considering both intra-cluster cohesion and inter-cluster separation. For classification tasks, we can adapt this concept by using class instead of cluster. Formally, the silhouette score for a sample \( i \) is defined as: \vspace{-2mm}
\begin{equation}
    s(i) = \frac{b(i) - a(i)}{\max(a(i), b(i))}
    \label{eq:silhouette} ,
\end{equation}
where \( a(i) \) represents the average distance between sample \( i \) and all other points within the same class (intra-class compactness), and \( b(i) \) denotes the smallest average distance between \( i \) and all points in the nearest different class (inter-class separation). Values close to \( +1 \) indicate well-separated samples, values around \( 0 \) suggest samples near decision boundaries, and negative values imply potential misclassification. In FR models trained with Angular Margin Penalty-based Softmax Loss \cite{deng2019arcface, curricularFace, MagFace, elasticface, DBLP:conf/cvpr/WangWZJGZL018, DBLP:conf/cvpr/LiuWYLRS17, meng_2021_magface}, class centers \( w_{y_i} \) are stored in the classification layer's weight matrix. For efficiency, \( a(i) \) and \( b(i) \) can be redefined using cosine similarity with class centers rather than computing distances to all class members, as done by CR-FIQA \cite{boutros_2023_crfiqa}: \( a(i) := \cos(\theta_{y_i}) \) where \( \cos(\theta_{y_i}) = x_i \cdot W_{y_i} \), and \( b(i) := \max_{y \neq y_i} \cos(\theta_y) \). This reformulation shows that the silhouette score's information derives from the ratio of intra-class to inter-class distances. This aligns with the \textit{Relative Point Margin} (RPM) concept introduced by Ackerman \etal \cite{DBLP:conf/nips/Ben-DavidA08}, defined as \( RPM(x) = d(x, c_x)/d(x, c_{x_0}) \) where \( c_x \) is the closest center and \( c_{x_0} \) is the second closest. They show that relative margin measures derived from RPM satisfy Kleinberg's axioms for clustering quality \cite{DBLP:conf/nips/Kleinberg02, DBLP:conf/nips/Ben-DavidA08}. This theoretical foundation has been leveraged for FIQA applications, as we discuss next. \vspace{-2mm}

\subsection{Face Image Quality Assessment}
\label{subsec:related_fiqa}

FIQA methods can be categorized into four distinct groups, each with unique characteristics and trade-offs. \textbf{(1) Label-generation approaches} train regression networks using quality labels from various sources. FaceQnet \cite{faceqnetv1} uses ICAO compliance standards as quality references, while SDD-FIQA \cite{SDDFIQA} employs distribution distances between embeddings. RankIQ \cite{RANKIQ_FIQA} adopts a learning-to-rank strategy, training models to predict quality rankings based on FR performance metrics across different datasets, better capturing relative differences between samples. A notable limitation is that these approaches often decouple FIQA from FR, typically employing shallower networks that don't fully leverage the deep facial features captured by SOTA FR models. \textbf{(2) Non-FR model approaches} include DifFIQA \cite{10449044}, which leverages diffusion models to assess embedding robustness by exploring the stability of face representations under different noise conditions, and eDifFIQA \cite{babnikTBIOM2024}, which distills this approach into a lighter model through knowledge distillation for faster inference. While these methods can achieve high accuracy, they incur significant computational costs, especially when employing large generative models. \textbf{(3) Pre-trained FR analysis approaches} operate on fixed FR models without requiring additional training. SER-FIQ \cite{SERFIQ} measures embedding stability under dropout perturbations by evaluating consistency with varied dropout patterns across multiple forward passes. GraFIQs \cite{grafiqs} uses gradient magnitudes during backpropagation to evaluate how strongly each sample aligns with the FR model's optimization objective. FaceQAN \cite{FaceQAN} estimates quality by quantifying adversarial robustness. These methods leverage existing FR models but are constrained by their fixed representations and often require multiple inference passes or backpropagation. \textbf{(4) FR-integrated approaches} directly incorporate quality assessment into the FR training process, achieving the tightest coupling between quality estimation and recognition objectives. MagFace \cite{MagFace} links quality scores to embedding magnitudes through adaptive margin penalties and magnitude-aware losses. PFE \cite{PFE_FIQA} models face embeddings as Gaussian distributions where the uncertainty represents quality. CR-FIQA \cite{boutros_2023_crfiqa} employs the silhouette-based framework to estimate quality by predicting a sample's relative classifiability. Specifically, CR-FIQA defines Closest Class Similarity (CCS) and Nearest Non-Class Center Similarity (NNCCS) as \( \text{CCS}(x) = \cos(x, W_{y_x}) \) and \( \text{NNCCS}(x) = \max_{y \neq y_x} \cos(x, W_y) \), where \( W_{y_x} \) is the class center for the ground truth identity and \( W_y \) represents competing class centers. The quality is then estimated as \( q(x) = \text{CCS}(x)/\text{NNCCS}(x) \), capturing how well-separated a sample is from competing identities. By optimizing against this target, CR-FIQA \cite{boutros_2023_crfiqa} aligns quality estimation with the induced clustering structure of the learned feature space. These FR-integrated approaches have consistently achieved top rankings in SOTA evaluations \cite{MagFace,boutros_2023_crfiqa}, demonstrating the advantages of fully incorporating quality assessment within the FR architecture. Despite the advances, existing FR-integrated methods rely on single-epoch measurements of quality-relevant properties (e.g., embedding magnitude at the final epoch, uncertainty at convergence, or classifiability at a specific training iteration). During training, each epoch generates a different induced class clustering, making quality estimation a moving target as \( q_j(x) \) varies across epochs \( j \). This causes not only fluctuating target values but also unstable sample rankings \cite{DBLP:conf/nips/GrillSATRBDPGAP20, DBLP:conf/icml/SutskeverMDH13}. These methods also typically select optimal checkpoints based on small benchmark performance and report results on larger datasets, introducing selection bias. Our method addresses these limitations by accumulating quality measurements across the entire training trajectory, providing a temporally-stable alternative that maintains the advantages of FR integration while improving reliability through cumulative averaging.\vspace{-1mm}

\begin{algorithm}[t]
\caption{CARPM-FIQA Training Procedure}
\label{alg:carpm_fiqa}
\begin{footnotesize}
\begin{algorithmic}[1]
\STATE \textbf{Input:} Training dataset $\mathcal{D} = \{(x_i, y_i)\}_{i=1}^{N}$, number of epochs $T$, hyperparameter $\lambda$
\STATE \textbf{Output:} Trained FR model with quality regression head
\STATE Initialize FR backbone network and quality regression head
\STATE Initialize cumulative quality scores: $Q(x_i)_0 = 0$ for all $i$
\FOR{epoch $n = 1$ to $T$}
    \FOR{each mini-batch $\mathcal{B} \subset \mathcal{D}$}
        \STATE // Forward pass: Extract embeddings
        \STATE Compute face embeddings $f_i$ for each sample $x_i \in \mathcal{B}$
        \STATE Compute predicted quality scores $\hat{Q}(x_i)_n$ from regression head
        \STATE // Compute current relative point margin (Eq. \ref{eq:silhouette})
        \STATE Compute $\cos(\theta_{y_i}) = f_i \cdot W_{y_i}$ (intra-class similarity)
        \STATE Compute $\max_{y \neq y_i} \cos(\theta_y)$ (inter-class similarity)
        \STATE Compute $q_n(x_i) = \frac{\cos(\theta_{y_i})}{\max_{y \neq y_i} \cos(\theta_y)}$
        \STATE // Update cumulative average quality score (Eq. \ref{eq:carpm})
        \STATE $Q(x_i)_n = \frac{q_n(x_i) + (n-1) \cdot Q(x_i)_{n-1}}{n}$
        \STATE // Compute losses
        \STATE Compute ArcFace loss: $\mathcal{L}_{Arc}$ \cite{deng2019arcface}
        \STATE Compute quality loss: $\mathcal{L}_{Q} = \frac{1}{|\mathcal{B}|}\sum_{x_i \in \mathcal{B}} \ell(Q(x_i)_n, \hat{Q}(x_i)_n)$ (Eq. \ref{eq:quality_loss})
        \STATE // Combined loss optimization (Eq. \ref{eq:total_loss})
        \STATE $\mathcal{L} = \mathcal{L}_{Arc} + \lambda \mathcal{L}_{Q}$
        \STATE Update model parameters via backpropagation
    \ENDFOR
\ENDFOR
\STATE \textbf{return} Trained FR model and quality regression head
\end{algorithmic}
\end{footnotesize} 
\end{algorithm}

\begin{figure}[t]
  \centering
  \subfloat[%
    EDC curves across epochs for non-cumulative (red) and cumulative (blue) targets. Better curves for cumulative (blue) targets.%
    \label{fig:edc_curves}
  ]{%
    \includegraphics[width=0.46\linewidth]{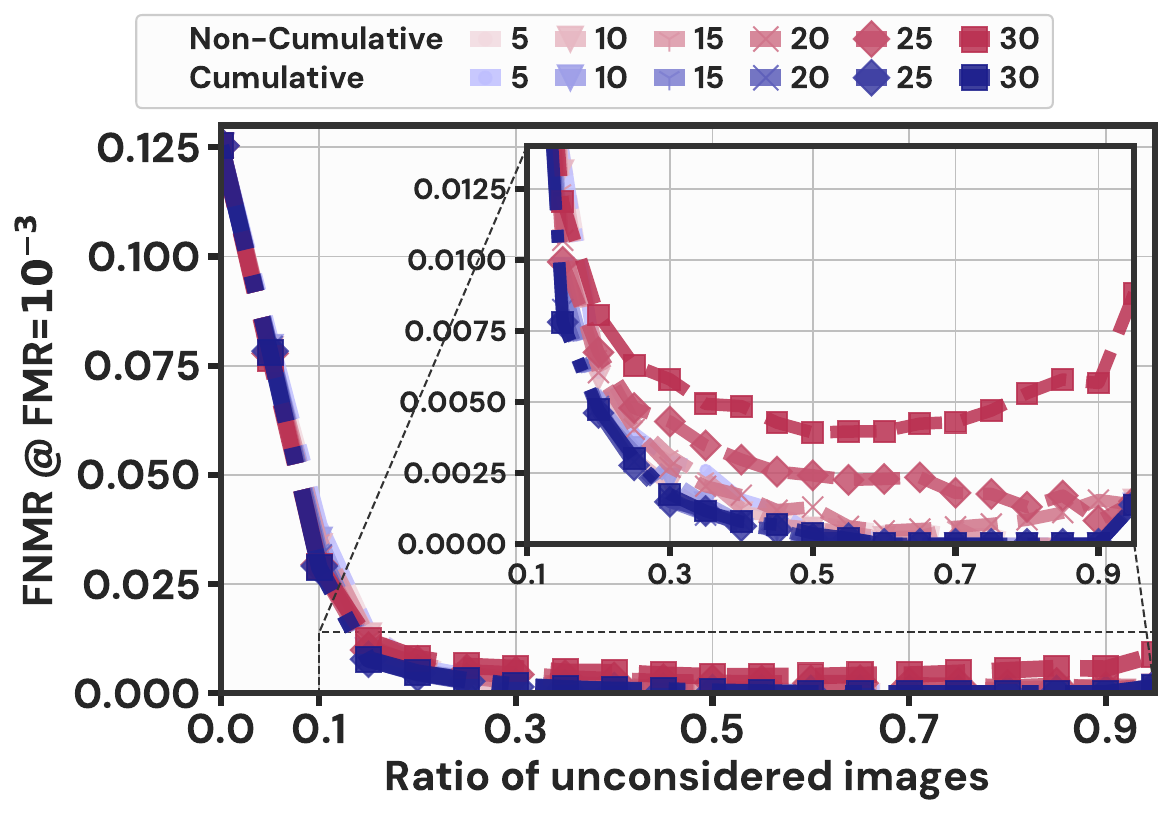}
  }
  \hfill
  \subfloat[%
    AUC-EDC trends. Lower values indicate better performance. Non-cumulative degrades after a sweet spot.%
    \label{fig:auc_comparison}
  ]{%
    \includegraphics[width=0.43\linewidth]{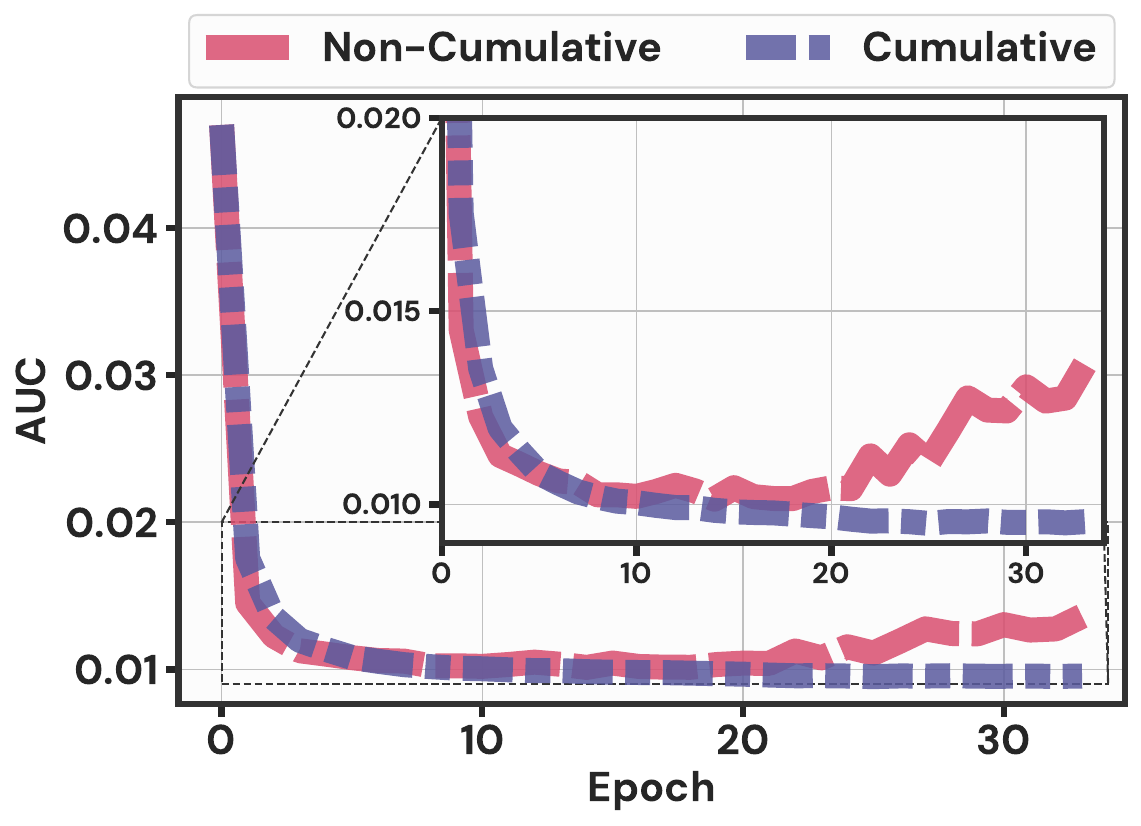}
  }
  \par
  \subfloat[%
    Non-cumulative target distribution over epochs. Scores drift rightward and compress, reducing sample differentiation.%
    \label{fig:dist-non-cumulative}
  ]{%
    \includegraphics[width=0.46\linewidth]{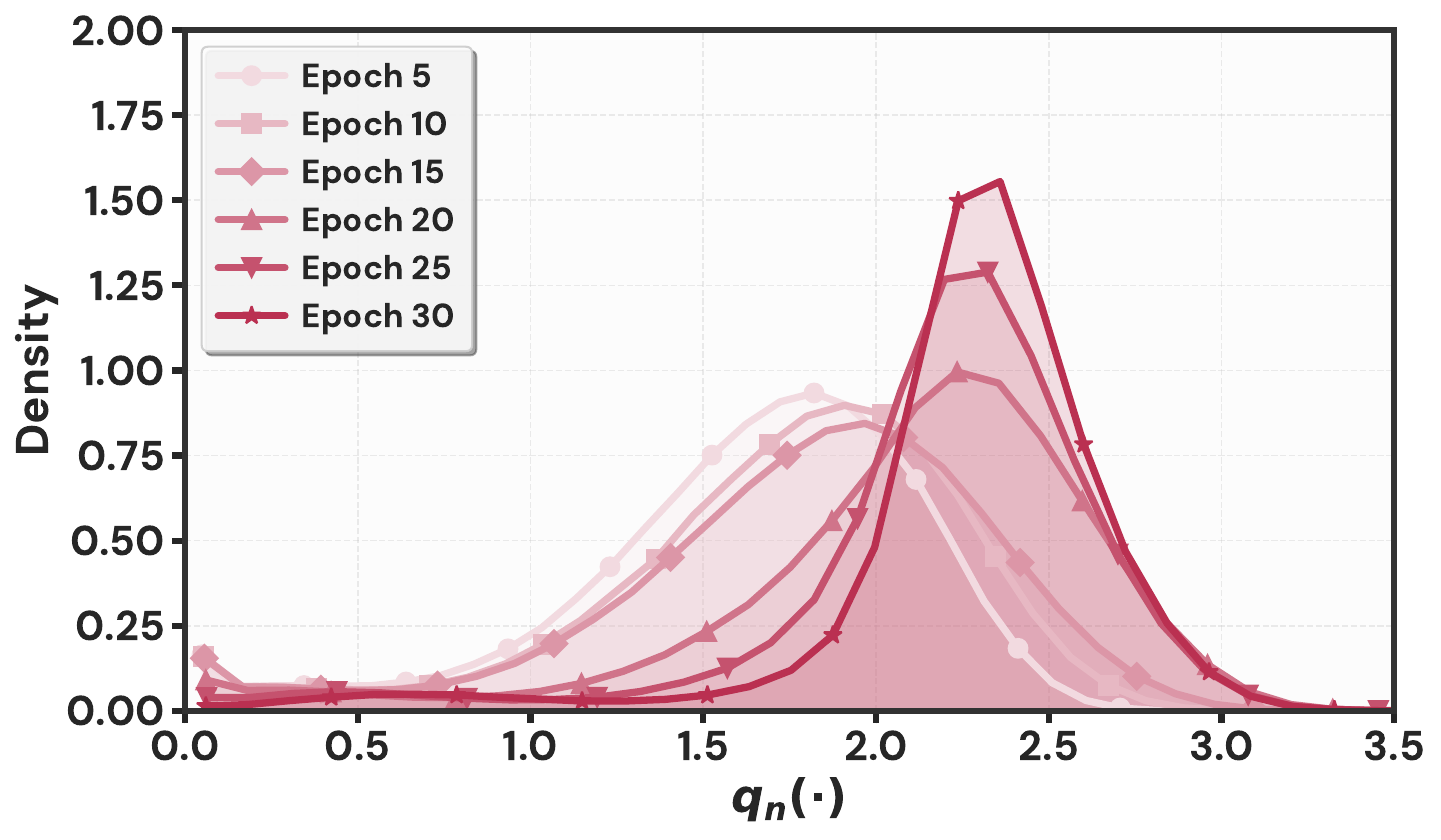}
  }
  \hfill
  \subfloat[%
    Cumulative target distributions remain more stable and well-spread across epochs, preserving sample differentiation.%
    \label{fig:dist-cumulative}
  ]{%
    \includegraphics[width=0.46\linewidth]{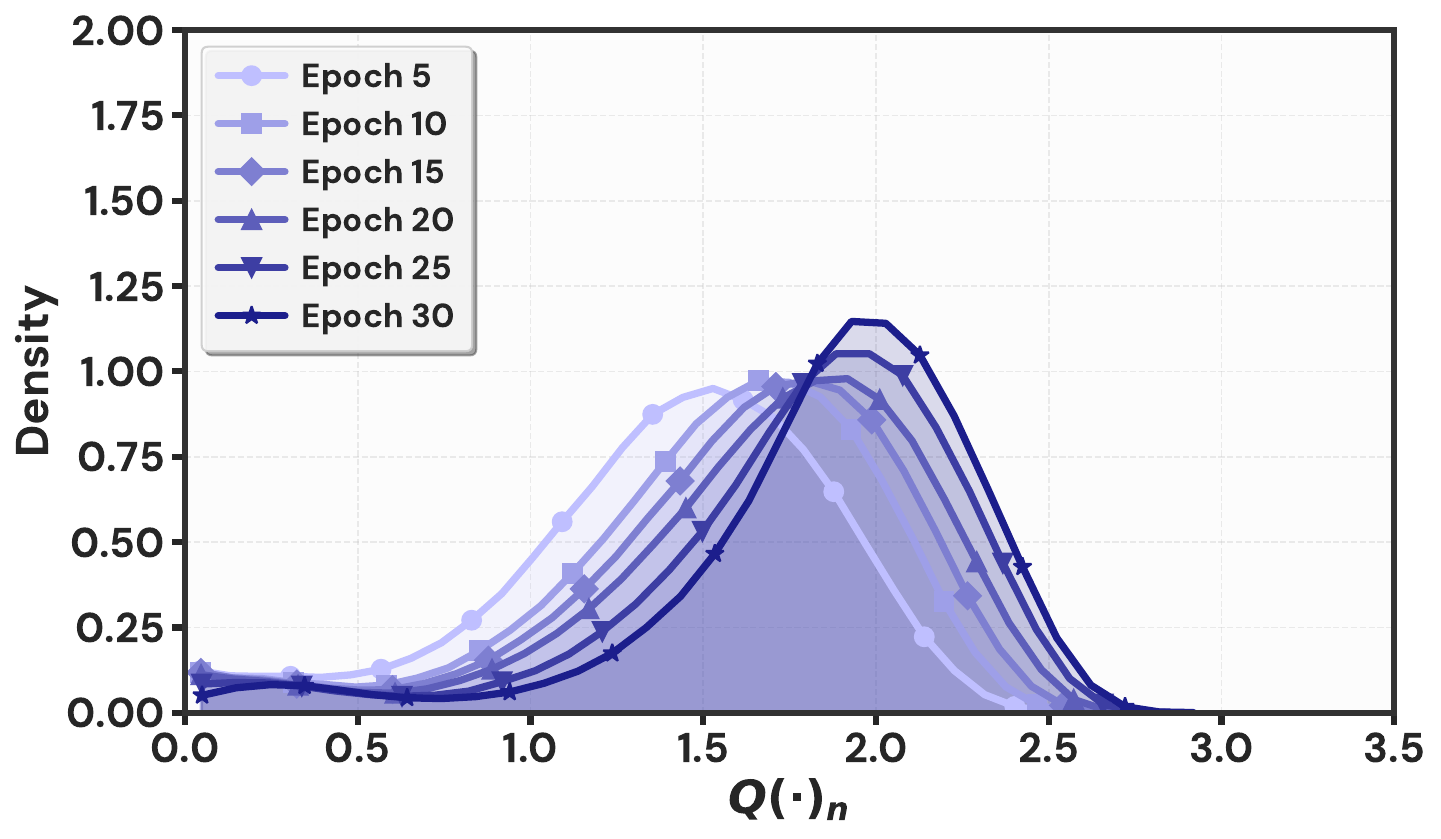}
  }
  \caption{Comparison of cumulative and non-cumulative quality estimation. (a) EDC curves and (b) AUC-EDC trends show cumulative averaging improves estimates consistently, while non-cumulative degrades after epoch 20. (c, d) Quality target distributions highlight non-cumulative compression versus cumulative stability.}
  \label{fig:non_cum_vs_cum}
  \vspace{-5mm}
\end{figure}

\begin{table*}[t]
\centering
\caption{Comparison of AUC-EDC and pAUC-EDC (lower is better) for non-cumulative and cumulative quality estimation across models and benchmarks (FMR: $1e-3$, $1e-4$) as well as a frozen backbone with non-cumulative targets. All models (frozen, non-cumulative, and cumulative) are trained with ArcFace loss, ResNet50 architecture, CASIA-WebFace training dataset. Best results are \textbf{bolded}. Cumulative outperforms non-cumulative on most benchmarks; the consistent exceptions are AgeDB-30 and CALFW, where non-cumulative performs better. On XQLFW, the ordering (averaged across FR models; see the Average rows) is frozen best, followed by cumulative, then non-cumulative, whereas frozen is generally the weakest variant on all other benchmarks. All values are multiplied by 1000 for easy inspection.}\vspace{-1mm}
The \textit{Average} rows report each variant's value averaged across the six FR models, and the last column (\textit{Avg.\ norm.}) reports each row's values averaged across the seven benchmarks and both FMR thresholds after normalizing each benchmark-FMR column by the best variant in its comparison group (1.0 = best everywhere; lower is better). Cumulative achieves the best overall performance across benchmarks and FMR thresholds, attaining the lowest \textit{Avg.\ norm.} value for every FR model and on average, on both AUC-EDC and pAUC-EDC.
\label{tab:ablation_arcface_webface}
\resizebox{\textwidth}{!}{\begin{tabular}{|c|c|c|cc|cc|cc|cc|cc|cc|cc|c|}
\hline
 \multirow{3}{*}{Metric} & \multirow{3}{*}{FR} & \multirow{3}{*}{Quality Metric}
& \multicolumn{2}{c|}{Adience} & \multicolumn{2}{c|}{AgeDB-30} & \multicolumn{2}{c|}{CFP-FP}
& \multicolumn{2}{c|}{LFW} & \multicolumn{2}{c|}{CALFW} & \multicolumn{2}{c|}{CPLFW}
& \multicolumn{2}{c|}{XQLFW} & \multirow{3}{*}{\text{\shortstack{Avg.\\norm.}}} \\
 & &
 & \multicolumn{2}{c|}{\cite{Adience}} & \multicolumn{2}{c|}{\cite{agedb}} & \multicolumn{2}{c|}{\cite{cfp-fp}}
 & \multicolumn{2}{c|}{\cite{LFWTech}} & \multicolumn{2}{c|}{\cite{CALFW}} & \multicolumn{2}{c|}{\cite{CPLFWTech}}
 & \multicolumn{2}{c|}{\cite{XQLFW}} & \\
\cline{4-17}
 & & & $1e{-3}$ & $1e{-4}$ & $1e{-3}$ & $1e{-4}$ & $1e{-3}$ & $1e{-4}$ & $1e{-3}$ & $1e{-4}$ & $1e{-3}$ & $1e{-4}$ & $1e{-3}$ & $1e{-4}$ & $1e{-3}$ & $1e{-4}$ & \\
\hline \hline
\multirow{21}{*}{AUC-EDC} & \multirow{3}{*}{ArcFace \cite{deng2019arcface}} & frozen & 33.335 & 83.241 & 20.781 & 25.034 & 20.833 & 27.905 & 1.981 & 2.485 & 58.237 & 62.910 & 57.322 & 82.565 & \textbf{213.641} & \textbf{240.517} & \text{1.547} \\
 &  & non-cumulative & 23.407 & 52.503 & \textbf{16.533} & \textbf{20.650} & 10.162 & 15.686 & 2.062 & 2.529 & \textbf{52.835} & \textbf{55.625} & 42.358 & 60.225 & 254.726 & 304.557 & \text{1.184} \\
 &  & cumulative & \textbf{21.295} & \textbf{42.531} & 16.879 & 22.410 & \textbf{6.486} & \textbf{9.244} & \textbf{1.570} & \textbf{2.230} & 58.807 & 62.783 & \textbf{41.057} & \textbf{57.567} & 231.964 & 259.898 & \text{\textbf{1.037}} \\ \cline{2-18}
 & \multirow{3}{*}{ElasticFace \cite{elasticface}} & frozen & 36.298 & 70.997 & 20.526 & 22.399 & 13.869 & 18.745 & 1.955 & 2.485 & 56.377 & 58.417 & 43.612 & 62.091 & \textbf{186.640} & \textbf{225.340} & \text{1.461} \\
 &  & non-cumulative & 25.979 & 48.907 & 16.942 & 18.013 & 7.582 & 11.851 & 1.442 & 2.062 & \textbf{51.185} & \textbf{52.779} & 39.750 & 53.091 & 226.792 & 302.551 & \text{1.166} \\
 &  & cumulative & \textbf{22.682} & \textbf{39.587} & \textbf{16.688} & \textbf{17.823} & \textbf{5.560} & \textbf{7.545} & \textbf{1.109} & \textbf{1.922} & 57.150 & 58.579 & \textbf{39.060} & \textbf{51.272} & 204.030 & 249.978 & \text{\textbf{1.031}} \\ \cline{2-18}
 & \multirow{3}{*}{MagFace \cite{MagFace}} & frozen & 34.540 & 78.648 & 22.082 & \textbf{28.350} & 27.453 & 32.762 & 2.256 & 3.032 & 57.207 & 58.898 & 51.773 & 120.113 & \textbf{241.775} & \textbf{277.810} & \text{1.560} \\
 &  & non-cumulative & 23.647 & 52.045 & \textbf{18.694} & 30.821 & 12.606 & 14.723 & 2.146 & 2.683 & \textbf{51.981} & \textbf{53.047} & 44.919 & 87.496 & 281.443 & 328.661 & \text{1.174} \\
 &  & cumulative & \textbf{21.605} & \textbf{42.378} & 19.017 & 33.299 & \textbf{7.832} & \textbf{11.075} & \textbf{1.578} & \textbf{2.189} & 57.975 & 59.150 & \textbf{41.753} & \textbf{81.590} & 265.804 & 320.926 & \text{\textbf{1.048}} \\ \cline{2-18}
 & \multirow{3}{*}{CurricularFace \cite{curricularFace}} & frozen & 29.950 & 71.524 & 20.424 & 25.872 & 21.887 & 28.336 & 1.981 & 2.650 & 57.188 & 60.268 & 41.033 & 64.274 & \textbf{180.657} & \textbf{215.269} & \text{1.461} \\
 &  & non-cumulative & 21.851 & 46.802 & 18.334 & 21.500 & 11.540 & 16.709 & 2.062 & 2.529 & \textbf{51.814} & \textbf{54.149} & \textbf{36.398} & 53.268 & 215.303 & 246.469 & \text{1.165} \\
 &  & cumulative & \textbf{20.274} & \textbf{36.740} & \textbf{18.031} & \textbf{21.229} & \textbf{7.375} & \textbf{10.285} & \textbf{1.634} & \textbf{2.230} & 57.430 & 60.509 & 36.791 & \textbf{53.024} & 204.355 & 239.521 & \text{\textbf{1.034}} \\ \cline{2-18}
 & \multirow{3}{*}{TransFace \cite{transface}} & frozen & 33.707 & 68.929 & 20.811 & 24.685 & 10.482 & 16.110 & 1.965 & 2.424 & 59.090 & 464.584 & 34.994 & 46.890 & \textbf{69.294} & \textbf{93.212} & \text{1.412} \\
 &  & non-cumulative & 22.517 & 39.767 & \textbf{15.743} & \textbf{18.092} & 5.506 & 8.597 & 2.045 & 2.504 & \textbf{52.110} & \textbf{456.428} & \textbf{34.111} & \textbf{45.907} & 90.642 & 117.726 & \text{1.135} \\
 &  & cumulative & \textbf{20.823} & \textbf{33.604} & 16.467 & 18.680 & \textbf{4.446} & \textbf{6.650} & \textbf{1.553} & \textbf{2.085} & 58.434 & 565.954 & 35.297 & 47.288 & 77.817 & 109.890 & \text{\textbf{1.058}} \\ \cline{2-18}
 & \multirow{3}{*}{SwinFace \cite{swinface}} & frozen & 50.564 & 128.186 & 23.977 & 28.554 & 47.800 & 146.308 & 2.404 & 3.058 & 56.860 & 58.081 & 87.223 & 130.785 & \textbf{162.007} & \textbf{212.233} & \text{1.456} \\
 &  & non-cumulative & 32.844 & 78.940 & \textbf{19.393} & \textbf{22.898} & 25.584 & \textbf{84.612} & 2.078 & 2.691 & \textbf{50.364} & \textbf{54.585} & 63.878 & 89.421 & 198.152 & 251.363 & \text{1.121} \\
 &  & cumulative & \textbf{27.350} & \textbf{62.377} & 21.021 & 25.854 & \textbf{22.283} & 108.772 & \textbf{1.604} & \textbf{2.233} & 56.743 & 62.658 & \textbf{59.125} & \textbf{82.312} & 171.105 & 243.041 & \text{\textbf{1.070}} \\
 \cline{2-18}
 & \multirow{3}{*}{\text{Average}} & \text{frozen} & \text{36.399} & \text{83.587} & \text{21.433} & \text{25.816} & \text{23.721} & \text{45.028} & \text{2.091} & \text{2.689} & \text{57.493} & \text{127.193} & \text{52.660} & \text{84.453} & \text{\textbf{175.669}} & \text{\textbf{210.730}} & \text{1.413} \\
 &  & \text{non-cumulative} & \text{25.041} & \text{53.161} & \text{\textbf{17.606}} & \text{\textbf{21.996}} & \text{12.163} & \text{\textbf{25.363}} & \text{1.972} & \text{2.500} & \text{\textbf{51.715}} & \text{\textbf{121.102}} & \text{43.569} & \text{64.901} & \text{211.176} & \text{258.554} & \text{1.121} \\
 &  & \text{cumulative} & \text{\textbf{22.338}} & \text{\textbf{42.869}} & \text{18.017} & \text{23.216} & \text{\textbf{8.997}} & \text{25.595} & \text{\textbf{1.508}} & \text{\textbf{2.148}} & \text{57.757} & \text{144.939} & \text{\textbf{42.181}} & \text{\textbf{62.176}} & \text{192.513} & \text{237.209} & \text{\textbf{1.045}} \\
\hline \hline
\multirow{21}{*}{pAUC-EDC} & \multirow{3}{*}{ArcFace \cite{deng2019arcface}} & frozen & 13.063 & 32.980 & 8.423 & 11.422 & 6.858 & 11.877 & 0.702 & 0.864 & 22.940 & 25.581 & 22.498 & 36.782 & \textbf{143.146} & \textbf{157.559} & \text{1.209} \\
 &  & non-cumulative & 11.715 & 29.379 & \textbf{7.910} & \textbf{11.403} & 5.009 & 9.204 & 0.805 & 0.931 & \textbf{21.367} & \textbf{23.124} & 22.664 & 35.959 & 155.245 & 188.752 & \text{1.142} \\
 &  & cumulative & \textbf{10.511} & \textbf{26.547} & 8.342 & 12.519 & \textbf{3.709} & \textbf{5.864} & \textbf{0.588} & \textbf{0.822} & 22.131 & 24.525 & \textbf{21.871} & \textbf{35.208} & 147.636 & 161.204 & \text{\textbf{1.022}} \\ \cline{2-18}
 & \multirow{3}{*}{ElasticFace \cite{elasticface}} & frozen & 14.336 & 29.034 & 8.414 & 9.306 & 5.487 & 8.376 & 0.676 & 0.864 & 21.761 & 22.936 & 21.523 & 33.859 & \textbf{133.352} & \textbf{153.448} & \text{1.192} \\
 &  & non-cumulative & 13.043 & 25.965 & \textbf{7.749} & \textbf{8.213} & 4.455 & 6.945 & 0.688 & 0.931 & \textbf{20.437} & \textbf{20.966} & 21.379 & 32.451 & 142.825 & 182.490 & \text{1.129} \\
 &  & cumulative & \textbf{11.679} & \textbf{23.546} & 7.879 & 8.434 & \textbf{3.503} & \textbf{4.892} & \textbf{0.471} & \textbf{0.822} & 21.442 & 21.948 & \textbf{20.973} & \textbf{31.922} & 135.770 & 159.046 & \text{\textbf{1.014}} \\ \cline{2-18}
 & \multirow{3}{*}{MagFace \cite{MagFace}} & frozen & 13.328 & 32.788 & 9.381 & \textbf{14.813} & 9.434 & 13.259 & 0.727 & 1.070 & 22.432 & 23.350 & 24.843 & 68.376 & \textbf{153.361} & \textbf{173.775} & \text{1.214} \\
 &  & non-cumulative & 11.877 & 28.786 & \textbf{8.305} & 17.189 & 7.066 & 8.804 & 0.889 & 1.085 & \textbf{20.892} & \textbf{21.514} & 24.610 & 58.882 & 161.753 & 182.126 & \text{1.128} \\
 &  & cumulative & \textbf{10.841} & \textbf{26.025} & 8.694 & 18.922 & \textbf{5.236} & \textbf{8.094} & \textbf{0.596} & \textbf{0.792} & 21.990 & 22.386 & \textbf{24.247} & \textbf{58.189} & 163.947 & 193.047 & \text{\textbf{1.043}} \\ \cline{2-18}
 & \multirow{3}{*}{CurricularFace \cite{curricularFace}} & frozen & 11.386 & 27.017 & 9.114 & 11.654 & 7.067 & 10.230 & 0.702 & 0.937 & 22.422 & 24.161 & 19.225 & 35.952 & \textbf{132.180} & \textbf{150.382} & \text{1.176} \\
 &  & non-cumulative & 10.179 & 24.340 & \textbf{8.324} & \textbf{10.146} & 5.580 & 8.393 & 0.805 & 0.931 & \textbf{20.564} & \textbf{22.013} & 19.443 & 34.322 & 139.079 & 155.219 & \text{1.096} \\
 &  & cumulative & \textbf{9.422} & \textbf{21.721} & 8.535 & 10.488 & \textbf{4.304} & \textbf{6.289} & \textbf{0.641} & \textbf{0.822} & 21.693 & 23.279 & \textbf{19.076} & \textbf{33.470} & 133.966 & 152.651 & \text{\textbf{1.014}} \\ \cline{2-18}
 & \multirow{3}{*}{TransFace \cite{transface}} & frozen & 12.144 & 25.793 & 7.664 & 9.503 & 3.997 & 5.959 & 0.685 & 0.803 & 22.521 & 254.983 & 16.763 & 28.080 & 54.059 & \textbf{74.149} & \text{1.163} \\
 &  & non-cumulative & 10.718 & 21.543 & \textbf{7.044} & \textbf{8.295} & 3.087 & 4.731 & 0.789 & 0.906 & \textbf{20.657} & 253.444 & 17.087 & 28.307 & 62.373 & 79.862 & \text{1.109} \\
 &  & cumulative & \textbf{9.896} & \textbf{19.418} & 7.550 & 8.675 & \textbf{2.588} & \textbf{4.066} & \textbf{0.571} & \textbf{0.688} & 21.775 & \textbf{252.483} & \textbf{16.556} & \textbf{28.070} & \textbf{53.721} & 77.714 & \text{\textbf{1.016}} \\ \cline{2-18}
 & \multirow{3}{*}{SwinFace \cite{swinface}} & frozen & 20.047 & 49.345 & 10.669 & 13.581 & 18.906 & 74.479 & 0.826 & 1.096 & 22.415 & \textbf{23.205} & 36.601 & 54.875 & 117.684 & \textbf{159.068} & \text{1.174} \\
 &  & non-cumulative & 17.647 & 44.439 & \textbf{9.066} & \textbf{11.402} & 15.292 & 65.581 & 0.822 & 1.093 & \textbf{20.426} & 24.204 & 35.628 & 51.720 & 132.657 & 166.825 & \text{1.099} \\
 &  & cumulative & \textbf{15.418} & \textbf{40.226} & 10.052 & 12.519 & \textbf{12.981} & \textbf{63.535} & \textbf{0.622} & \textbf{0.836} & 21.385 & 26.856 & \textbf{34.375} & \textbf{50.927} & \textbf{114.485} & 161.526 & \text{\textbf{1.030}} \\
\cline{2-18}
 & \multirow{3}{*}{\text{Average}} & \text{frozen} & \text{14.051} & \text{32.826} & \text{8.944} & \text{11.713} & \text{8.625} & \text{20.697} & \text{0.719} & \text{0.939} & \text{22.415} & \text{62.369} & \text{23.575} & \text{42.987} & \text{\textbf{122.297}} & \text{\textbf{144.730}} & \text{1.160} \\
 &  & \text{non-cumulative} & \text{12.530} & \text{29.075} & \text{\textbf{8.066}} & \text{\textbf{11.108}} & \text{6.748} & \text{17.276} & \text{0.800} & \text{0.979} & \text{\textbf{20.724}} & \text{\textbf{60.877}} & \text{23.468} & \text{40.273} & \text{132.322} & \text{159.213} & \text{1.101} \\
 &  & \text{cumulative} & \text{\textbf{11.294}} & \text{\textbf{26.247}} & \text{8.509} & \text{11.926} & \text{\textbf{5.387}} & \text{\textbf{15.457}} & \text{\textbf{0.581}} & \text{\textbf{0.797}} & \text{21.736} & \text{61.913} & \text{\textbf{22.850}} & \text{\textbf{39.631}} & \text{124.921} & \text{150.865} & \text{\textbf{1.018}} \\
\hline
\end{tabular}}
\vspace{-4mm}
\end{table*}

\begin{table*}[t]
\centering
\caption{Comparison of AUC-EDC and pAUC-EDC (lower is better) under the same experimental setting as Tab. \ref{tab:ablation_arcface_webface}, but with CurricularFace loss instead of ArcFace.}
\label{tab:ablation_curricularface_webface}
\resizebox{\textwidth}{!}{\begin{tabular}{|c|c|c|cc|cc|cc|cc|cc|cc|cc|c|}
\hline
 \multirow{3}{*}{Metric} & \multirow{3}{*}{FR} & \multirow{3}{*}{Quality Metric}
& \multicolumn{2}{c|}{Adience} & \multicolumn{2}{c|}{AgeDB-30} & \multicolumn{2}{c|}{CFP-FP}
& \multicolumn{2}{c|}{LFW} & \multicolumn{2}{c|}{CALFW} & \multicolumn{2}{c|}{CPLFW}
& \multicolumn{2}{c|}{XQLFW} & \multirow{3}{*}{\text{\shortstack{Avg.\\norm.}}} \\
 & &
 & \multicolumn{2}{c|}{\cite{Adience}} & \multicolumn{2}{c|}{\cite{agedb}} & \multicolumn{2}{c|}{\cite{cfp-fp}}
 & \multicolumn{2}{c|}{\cite{LFWTech}} & \multicolumn{2}{c|}{\cite{CALFW}} & \multicolumn{2}{c|}{\cite{CPLFWTech}}
 & \multicolumn{2}{c|}{\cite{XQLFW}} & \\
\cline{4-17}
 & & & $1e{-3}$ & $1e{-4}$ & $1e{-3}$ & $1e{-4}$ & $1e{-3}$ & $1e{-4}$ & $1e{-3}$ & $1e{-4}$ & $1e{-3}$ & $1e{-4}$ & $1e{-3}$ & $1e{-4}$ & $1e{-3}$ & $1e{-4}$ & \\
\hline  \hline
\multirow{14}{*}{AUC-EDC} & \multirow{2}{*}{ArcFace \cite{deng2019arcface}} & non-cumulative & 23.938 & 56.259 & 17.593 & 22.133 & 6.792 & 10.693 & 1.688 & 2.155 & 57.384 & 61.981 & 39.313 & 56.546 & 260.345 & 312.863 & \text{1.052} \\
 &  & cumulative & 22.070 & 44.323 & 17.897 & 22.728 & 6.791 & 10.743 & 1.953 & 2.573 & 60.759 & 63.547 & 41.281 & 57.975 & 218.855 & 262.735 & \text{\textbf{1.040}} \\ \cline{2-18}
 & \multirow{2}{*}{ElasticFace \cite{elasticface}} & non-cumulative & 26.439 & 49.392 & 17.313 & 18.149 & 7.707 & 11.052 & 1.402 & 2.074 & 55.857 & 57.723 & 38.517 & 51.568 & 241.643 & 263.596 & \text{1.083} \\
 &  & cumulative & 23.646 & 40.996 & 17.973 & 18.768 & 5.867 & 8.173 & 1.630 & 2.457 & 59.054 & 60.379 & 39.146 & 51.597 & 205.884 & 277.024 & \text{\textbf{1.042}} \\ \cline{2-18}
 & \multirow{2}{*}{MagFace \cite{MagFace}} & non-cumulative & 24.360 & 56.304 & 19.064 & 31.340 & 10.384 & 14.833 & 1.840 & 2.359 & 56.578 & 57.619 & 41.334 & 90.054 & 314.435 & 360.528 & \text{1.104} \\
 &  & cumulative & 22.569 & 44.961 & 19.856 & 30.845 & 8.410 & 10.332 & 2.017 & 2.688 & 60.317 & 61.536 & 42.214 & 82.313 & 253.009 & 326.348 & \text{\textbf{1.031}} \\ \cline{2-18}
 & \multirow{2}{*}{CurricularFace \cite{curricularFace}} & non-cumulative & 22.344 & 47.814 & 18.744 & 21.328 & 8.845 & 13.727 & 1.688 & 2.155 & 56.748 & 59.930 & 35.956 & 53.613 & 223.808 & 246.655 & \text{1.075} \\
 &  & cumulative & 20.972 & 39.157 & 18.443 & 22.349 & 7.456 & 11.001 & 1.974 & 2.573 & 60.193 & 62.258 & 36.056 & 52.809 & 181.650 & 232.594 & \text{\textbf{1.037}} \\ \cline{2-18}
 & \multirow{2}{*}{TransFace \cite{transface}} & non-cumulative & 23.518 & 42.535 & 16.493 & 18.453 & 4.894 & 7.850 & 1.671 & 2.130 & 89.819 & 703.441 & 34.328 & 46.240 & 86.241 & 112.670 & \text{1.223} \\
 &  & cumulative & 21.821 & 35.608 & 17.465 & 19.920 & 4.900 & 7.149 & 1.936 & 2.473 & 59.590 & 235.642 & 34.372 & 45.952 & 75.502 & 101.402 & \text{\textbf{1.033}} \\ \cline{2-18}
 & \multirow{2}{*}{SwinFace \cite{swinface}} & non-cumulative & 33.254 & 81.283 & 18.530 & 23.991 & 26.913 & 106.746 & 1.866 & 2.546 & 55.328 & 57.665 & 58.698 & 84.181 & 196.414 & 250.585 & \text{1.070} \\
 &  & cumulative & 29.123 & 65.360 & 20.820 & 25.735 & 21.405 & 102.007 & 2.045 & 2.772 & 59.605 & 61.924 & 58.378 & 82.631 & 173.533 & 221.822 & \text{\textbf{1.038}} \\ \cline{2-18}
 & \multirow{2}{*}{\text{Average}} & \text{non-cumulative} & \text{25.642} & \text{55.598} & \text{\textbf{17.956}} & \text{\textbf{22.566}} & \text{10.922} & \text{27.483} & \text{\textbf{1.692}} & \text{\textbf{2.236}} & \text{61.952} & \text{166.393} & \text{\textbf{41.358}} & \text{63.700} & \text{220.481} & \text{257.816} & \text{1.129} \\
 &  & \text{cumulative} & \text{\textbf{23.367}} & \text{\textbf{45.067}} & \text{18.742} & \text{23.391} & \text{\textbf{9.138}} & \text{\textbf{24.901}} & \text{1.926} & \text{2.589} & \text{\textbf{59.920}} & \text{\textbf{90.881}} & \text{41.908} & \text{\textbf{62.213}} & \text{\textbf{184.739}} & \text{\textbf{236.987}} & \text{\textbf{1.028}} \\
\hline \hline
\multirow{14}{*}{pAUC-EDC} & \multirow{2}{*}{ArcFace \cite{deng2019arcface}} & non-cumulative & 11.556 & 29.233 & 8.021 & 11.692 & 4.548 & 7.641 & 0.798 & 0.923 & 21.190 & 23.559 & 22.482 & 35.916 & 154.570 & 183.714 & \text{1.045} \\
 &  & cumulative & 10.909 & 27.330 & 8.438 & 12.754 & 3.937 & 7.189 & 0.827 & 1.028 & 22.097 & 24.097 & 21.697 & 34.497 & 139.762 & 167.165 & \text{\textbf{1.026}} \\ \cline{2-18}
 & \multirow{2}{*}{ElasticFace \cite{elasticface}} & non-cumulative & 13.019 & 25.954 & 7.707 & 8.108 & 4.325 & 6.309 & 0.682 & 0.923 & 20.056 & 21.128 & 21.336 & 33.223 & 148.405 & 159.043 & \text{1.047} \\
 &  & cumulative & 12.202 & 24.048 & 8.132 & 8.592 & 3.642 & 5.389 & 0.710 & 1.028 & 20.929 & 21.661 & 20.485 & 31.341 & 140.463 & 181.493 & \text{\textbf{1.034}} \\ \cline{2-18}
 & \multirow{2}{*}{MagFace \cite{MagFace}} & non-cumulative & 11.843 & 29.948 & 8.356 & 18.076 & 6.536 & 10.508 & 0.914 & 1.091 & 20.982 & 21.477 & 24.591 & 66.731 & 176.116 & 193.452 & \text{1.096} \\
 &  & cumulative & 11.199 & 27.500 & 8.646 & 18.635 & 5.438 & 6.974 & 0.890 & 1.143 & 21.736 & 22.285 & 23.458 & 56.970 & 151.631 & 180.090 & \text{\textbf{1.013}} \\ \cline{2-18}
 & \multirow{2}{*}{CurricularFace \cite{curricularFace}} & non-cumulative & 10.252 & 24.431 & 8.389 & 10.124 & 5.172 & 7.998 & 0.798 & 0.923 & 20.655 & 22.378 & 19.434 & 35.640 & 132.671 & 146.162 & \text{1.049} \\
 &  & cumulative & 9.722 & 22.892 & 8.481 & 10.774 & 4.348 & 6.923 & 0.847 & 1.028 & 21.329 & 22.723 & 18.342 & 32.920 & 123.580 & 149.437 & \text{\textbf{1.023}} \\ \cline{2-18}
 & \multirow{2}{*}{TransFace \cite{transface}} & non-cumulative & 10.776 & 22.293 & 7.168 & 8.341 & 2.994 & 4.782 & 0.781 & 0.898 & 20.869 & 252.980 & 16.879 & 28.228 & 57.275 & 76.680 & \text{1.053} \\
 &  & cumulative & 10.248 & 20.300 & 7.803 & 9.050 & 2.876 & 4.364 & 0.810 & 0.927 & 21.413 & 195.848 & 16.431 & 27.171 & 54.256 & 73.290 & \text{\textbf{1.019}} \\ \cline{2-18}
 & \multirow{2}{*}{SwinFace \cite{swinface}} & non-cumulative & 17.749 & 44.404 & 9.203 & 11.893 & 14.737 & 65.154 & 0.940 & 1.242 & 20.533 & 22.429 & 35.571 & 52.873 & 123.755 & 156.253 & \text{1.041} \\
 &  & cumulative & 16.294 & 41.279 & 10.145 & 12.838 & 13.874 & 63.159 & 0.919 & 1.227 & 21.345 & 23.169 & 33.615 & 50.948 & 115.141 & 140.407 & \text{\textbf{1.018}} \\
\cline{2-18}
 & \multirow{2}{*}{\text{Average}} & \text{non-cumulative} & \text{12.532} & \text{29.377} & \text{\textbf{8.141}} & \text{\textbf{11.373}} & \text{6.385} & \text{17.065} & \text{\textbf{0.819}} & \text{\textbf{1.000}} & \text{\textbf{20.714}} & \text{60.658} & \text{23.382} & \text{42.102} & \text{132.132} & \text{152.551} & \text{1.056} \\
 &  & \text{cumulative} & \text{\textbf{11.762}} & \text{\textbf{27.225}} & \text{8.607} & \text{12.107} & \text{\textbf{5.686}} & \text{\textbf{15.666}} & \text{0.834} & \text{1.063} & \text{21.475} & \text{\textbf{51.630}} & \text{\textbf{22.338}} & \text{\textbf{38.975}} & \text{\textbf{120.806}} & \text{\textbf{148.647}} & \text{\textbf{1.017}} \\
\hline
\end{tabular}}
\vspace{-3mm}
\end{table*}

\begin{figure}[t]
    \centering
    \setlength{\tabcolsep}{1pt}
    \renewcommand{\arraystretch}{1.1}
    \resizebox{\linewidth}{!}{\includegraphics[width=\linewidth]{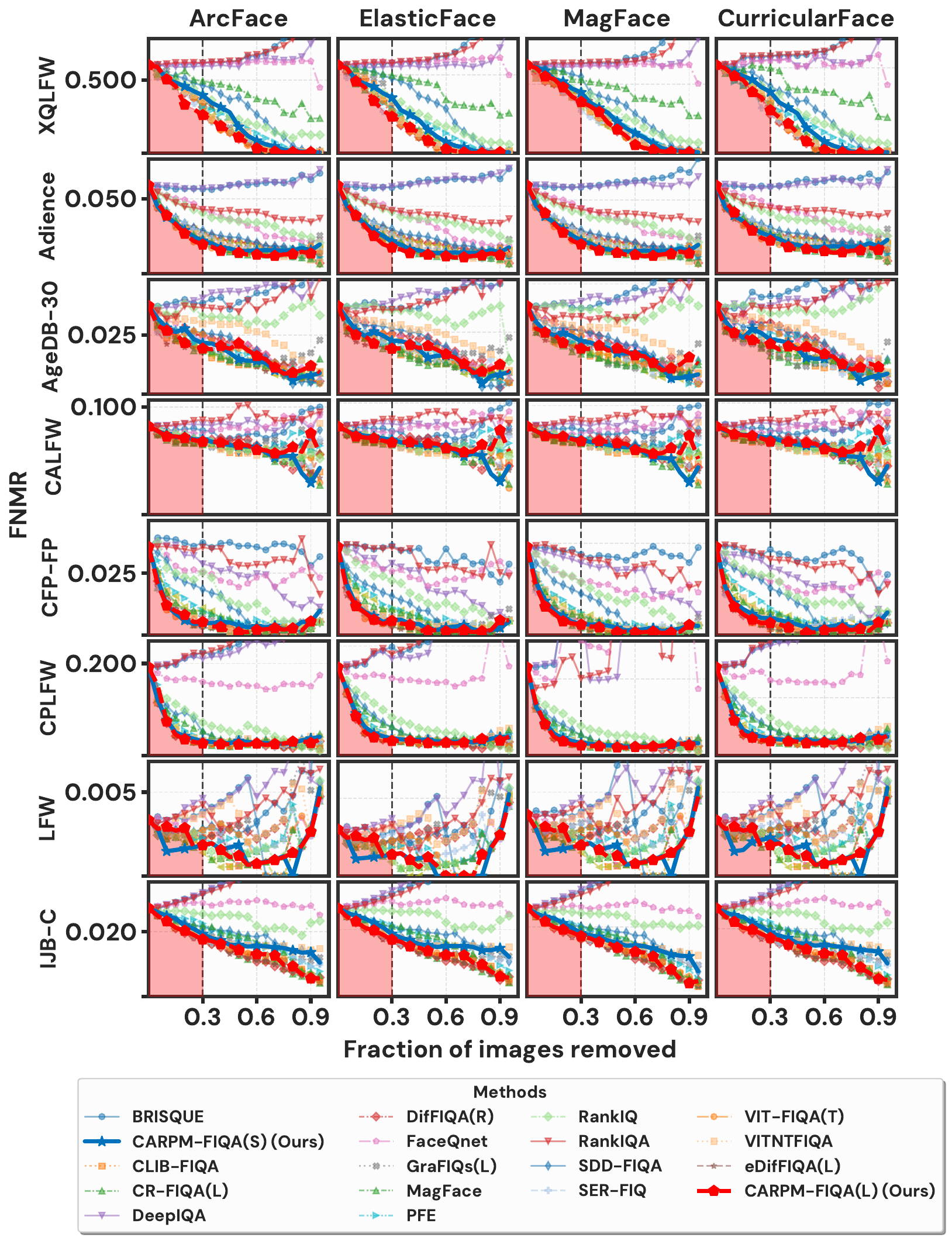}}
    \caption{Error-versus-Discard Characteristic (EDC) curves for FNMR@FMR=$1e-3$ of our proposed method \ourmethod in comparison to SOTA. Results shown on eight benchmark datasets: LFW \cite{LFWTech}, AgeDB-30 \cite{agedb}, CFP-FP \cite{cfp-fp}, CALFW \cite{CALFW}, Adience \cite{Adience}, CPLFW \cite{CPLFWTech}, XQLFW \cite{XQLFW}, and IJB-C \cite{ijbc}, using ArcFace \cite{deng2019arcface}, ElasticFace \cite{elasticface}, MagFace \cite{MagFace}, and CurricularFace \cite{curricularFace} FR models.}
    \label{fig:fnmr3}
    \vspace{-7mm}
\end{figure}

\begin{table}[t]
\centering
\caption{Influence of the accumulation window: window length $k$ (of $T$ training epochs) at which the growing first-$k$ window reaches Spearman rank correlation $\rho \geq 0.95$/$0.99$ against the full-trajectory target $Q_T$, and Spearman correlation with $Q_T$ of first-$k$ vs.\ last-$k$ windows at a matched relative window length ($k \approx 24\%$ of $T$: $k{=}8$ of 34 for the small protocol, $k{=}4$ of 18 for the large).}
\label{tab:window_main}
\resizebox{\linewidth}{!}{%
\begin{tabular}{lccccc}
\toprule
Configuration & $T$ & $k$@$\rho{=}0.95$ & $k$@$\rho{=}0.99$ & first-$k$ & last-$k$ \\
\midrule
ArcFace, cumulative, small            & 34 & 10 & 24 & \textbf{0.939} & 0.845 \\
ArcFace, non-cumulative, small        & 34 & 11 & 24 & \textbf{0.938} & 0.857 \\
CurricularFace, cumulative, small     & 34 & 11 & 24 & \textbf{0.935} & 0.854 \\
CurricularFace, non-cumulative, small & 34 & 11 & 24 & \textbf{0.934} & 0.869 \\
ArcFace, cumulative, large            & 18 & 8  & 14 & \textbf{0.876} & 0.841 \\
ArcFace, non-cumulative, large        & 18 & 8  & 14 & \textbf{0.883} & 0.840 \\
CurricularFace, cumulative, large     & 18 & 8  & 14 & \textbf{0.890} & 0.835 \\
CurricularFace, non-cumulative, large & 18 & 8  & 14 & \textbf{0.892} & 0.835 \\
\bottomrule
\end{tabular}}\vspace{-6mm}
\end{table}


\begin{table*}[t]
    \begin{center}
    \caption{The pAUC-EDC achieved by \ourmethod and the SOTA methods under different evaluation settings. The notions of $1e-3$ and $1e-4$ indicate the value of the fixed FMR at which the EDC curves (FNMR vs. reject) were calculated. The results are compared to three IQA and twelve FIQA approaches. All values are multiplied by 1000 for easy inspection.}
    The \textit{Average} row block reports each method's value averaged across the four FR models, and the last column (\textit{Avg.\ norm.}) reports each row's values averaged across the eight benchmarks and both FMR thresholds after normalizing each benchmark-FMR column by its best method (1.0 = matches the best method everywhere; lower is better).
    \label{tab:sota_pauc}
    \resizebox{\textwidth}{!}{
\begin{tabular}{|c|cl|cc|cc|cc|cc|cc|cc|cc|cc|c|}
    \hline
\multirow{3}{*}{FR} & \multicolumn{2}{c|}{\multirow{3}{*}{Method}}
& \multicolumn{2}{c|}{Adience} & \multicolumn{2}{c|}{AgeDB-30} & \multicolumn{2}{c|}{CFP-FP}
& \multicolumn{2}{c|}{LFW} & \multicolumn{2}{c|}{CALFW} & \multicolumn{2}{c|}{CPLFW}
& \multicolumn{2}{c|}{XQLFW} & \multicolumn{2}{c|}{IJB-C} & \multirow{3}{*}{\text{\shortstack{Avg.\\norm.}}} \\
 & &
 & \multicolumn{2}{c|}{\cite{Adience}} & \multicolumn{2}{c|}{\cite{agedb}} & \multicolumn{2}{c|}{\cite{cfp-fp}}
 & \multicolumn{2}{c|}{\cite{LFWTech}} & \multicolumn{2}{c|}{\cite{CALFW}} & \multicolumn{2}{c|}{\cite{CPLFWTech}}
 & \multicolumn{2}{c|}{\cite{XQLFW}}  & \multicolumn{2}{c|}{\cite{ijbc}} & \\ \cline{4-19}
                                                                                    &                                                   &  & $1e{-3}$ & $1e{-4}$ & $1e{-3}$ & $1e{-4}$  & $1e{-3}$ & $1e{-4}$  & $1e{-3}$ & $1e{-4}$  & $1e{-3}$ & $1e{-4}$ & $1e{-3}$ & $1e{-4}$ & $1e{-3}$ & $1e{-4}$ & $1e{-3}$ & $1e{-4}$ & \\ \hline
     \hline
     \multirow{17}{*}{\rotatebox[origin=c]{90}{ArcFace\cite{deng2019arcface}}}      & \multirow{3}{*}{\rotatebox[origin=c]{90}{IQA}}    & \multicolumn{1}{|l|}{BRISQUE\cite{BRISQE_IQA}} &17.1746 &39.8035 &11.5267 &17.4919 &11.3476 &16.2225 &1.0438 &1.2273 &23.9653 &26.8772 &61.1376 &77.2797 &181.8562 &203.0627 &8.7528 &13.7855                                                & \text{1.858} \\
                                                                                    &                                                   & \multicolumn{1}{|l|}{RankIQA\cite{liu2017rankiqa}} &14.6239 &35.4490 &10.5402 &17.0561 &10.5550 &15.9389 &1.0739 &1.2560 &25.2499 &28.2475 &61.6504 &80.0431 &184.0819 &205.2308 &8.6993 &13.6493                                                & \text{1.821} \\
                                                                                    &                                                   & \multicolumn{1}{|l|}{DeepIQA\cite{DEEPIQ_IQA}} &17.3394 &40.4844 &11.3285 &16.8783 &10.3545 &13.6823 &1.1158 &1.2964 &24.9032 &27.6482 &61.0665 &78.9024 &178.4038 &200.0800 &8.7849 &13.9447                                                & \text{1.828} \\
                                                                                    \cline{2-20}
                                                                                    & \multirow{14}{*}{\rotatebox[origin=c]{90}{FIQA}}   & \multicolumn{1}{|l|}{RankIQ\cite{RANKIQ_FIQA}} &14.5717 &35.7920 &10.7000 &17.1308 &8.3235 &13.3008 &0.7481 &0.9286 &22.5220 &25.4321 &34.5667 &48.4412 &148.0803 &172.4149 &7.8981 &12.1997 & \text{1.488} \\
                                                                                    &                                                   & \multicolumn{1}{|l|}{PFE\cite{PFE_FIQA}} &10.7396 &27.0895 &8.2111 &12.6747 &5.8932 &8.9760 &0.7953 &0.9206 &22.2564 &24.3611 &26.6043 &42.7585 &142.4588 &171.9463 &7.4699 &11.0029 & \text{1.245} \\
                                                                                    &                                                   & \multicolumn{1}{|l|}{SER-FIQ\cite{SERFIQ}} &11.6273 &27.4337 &7.7763 &12.2833 &3.7974 &6.3054 &0.8002 &0.9747 &22.0528 &24.2024 &21.5703 &35.0857 &132.3684 &156.8467 &6.5277 &10.0928 & \text{1.126} \\
                                                                                    &                                                   & \multicolumn{1}{|l|}{FaceQnet\cite{hernandez2019faceqnet,faceqnetv1}} &15.2730 &35.4687 &8.8043 &12.7036 &9.0087 &11.4703 &1.0069 &1.1323 &23.3391 &25.7229 &50.8810 &65.2778 &183.1442 &202.2128 &8.5016 &12.6979 & \text{1.602} \\
                                                                                    &                                                   & \multicolumn{1}{|l|}{MagFace\cite{MagFace}} &11.1539 &27.5222 &7.4280 &10.8164 &4.9517 &7.4949 &0.6803 &0.8408 &21.0663 &22.8291 &27.6654 &39.9884 &160.8330 &190.5259 &7.1540 &10.8829 & \text{1.181} \\
                                                                                    &                                                   & \multicolumn{1}{|l|}{SDD-FIQA\cite{SDDFIQA}} &11.8439 &29.7595 &8.6215 &10.1885 &7.8104 &12.4985 &0.7999 &0.9629 &22.3543 &24.2454 &31.1456 &41.9985 &159.1507 &178.9500 &7.2361 &11.0387 & \text{1.344} \\
                                                                                    &                                                   & \multicolumn{1}{|l|}{CR-FIQA(L)\cite{boutros_2023_crfiqa}} &10.9006 &22.3524 &7.6050 &9.9834 &3.6604 &6.1660 &0.8100 &1.0117 &20.9365 &21.9577 &20.3737 &33.2015 &140.0162 &159.1274 &6.5792 &10.1140 & \text{1.078} \\
                                                                                    &                                                   & \multicolumn{1}{|l|}{DifFIQA(R)\cite{10449044}} &11.2264 &29.1213 &9.2688 &13.7286 &3.9312 &6.7074 &0.7889 &0.9299 &21.8013 &24.3666 &20.1850 &33.5504 &137.8221 &158.6149 &6.4824 &9.8720 & \text{1.148} \\
                                                                                    &                                                   & \multicolumn{1}{|l|}{eDifFIQA(L)\cite{babnikTBIOM2024}} &10.2102 &25.4868 &6.8802 &8.8783 &3.5460 &6.0476 &0.7847 &0.9082 &21.0121 &23.4658 &20.0858 &33.3475 &142.3163 &166.8084 &6.4695 &9.7904 & \text{\textbf{1.059}} \\
                                                                                    &                                                   & \multicolumn{1}{|l|}{GraFIQs(L)\cite{grafiqs}} &10.5415 &23.7571 &7.7167 &11.0336 &4.3479 &7.1032 &0.8403 &1.0396 &21.4252 &23.9000 &22.4948 &37.6689 &144.3087 &158.6825 &6.8626 &10.2937 & \text{1.144} \\
                                                                                    &                                                   & \multicolumn{1}{|l|}{CLIB-FIQA\cite{Ou_2024_CVPR}} &10.9312 &27.3190 &7.3872 &9.4355 &4.0696 &6.7689 &0.7903 &0.9149 &21.0636 &23.3403 &20.4309 &33.6266 &137.3993 &150.9315 &6.5955 &9.9566 & \text{1.090} \\
                                                                                    &                                                   & \multicolumn{1}{|l|}{ViT-FIQA(T)\cite{atzori2025vitfiqaassessingfaceimage}} &9.9483 &25.6644 &8.2337 &10.7342 &3.5684 &5.6625 &0.7706 &0.8960 &21.7708 &23.6136 &20.5313 &33.3878 &140.4654 &156.2746 &6.5626 &10.1183 & \text{1.079} \\ \cline{3-20}
                                                                                    &                                                   & \multicolumn{1}{|l|}{\ourmethod(S)} &10.5106 &26.5466 &8.3419 &12.5191 &3.7087 &5.8637 &0.5878 &0.8217 &22.1310 &24.5247 &21.8705 &35.2079 &147.6362 &161.2037 &7.0624 &10.7002 & \text{1.103} \\
                                                                                    &                                                   & \multicolumn{1}{|l|}{\ourmethod(L)} &10.3328 &23.2626 &7.5733 &10.5830 &3.8167 &5.9570 &0.7813 &0.9243 &21.9167 &23.0551 &22.2518 &35.3623 &130.6450 &162.4970 &6.5683 &10.0681 & \text{1.084} \\
     \hline \hline
     \multirow{17}{*}{\rotatebox[origin=c]{90}{ElasticFace\cite{elasticface}}}      & \multirow{3}{*}{\rotatebox[origin=c]{90}{IQA}}    & \multicolumn{1}{|l|}{BRISQUE\cite{BRISQE_IQA}} &19.1770 &36.1957 &10.7763 &11.8841 &9.8852 &12.8922 &0.8696 &1.2273 &23.3285 &24.1776 &52.4823 &127.2291 &169.7469 &196.7763 &8.3813 &13.1581                                                & \text{1.954} \\
                                                                                    &                                                   & \multicolumn{1}{|l|}{RankIQA\cite{liu2017rankiqa}} &16.2262 &32.2490 &10.1585 &11.5864 &9.8444 &12.3928 &0.9008 &1.2560 &24.5531 &25.3420 &52.7986 &131.4940 &171.5528 &199.3418 &8.2855 &12.9129                                                & \text{1.933} \\
                                                                                    &                                                   & \multicolumn{1}{|l|}{DeepIQA\cite{DEEPIQ_IQA}} &19.4208 &36.9403 &10.6133 &11.7936 &8.7117 &11.8020 &0.9451 &1.2964 &24.1481 &24.9591 &52.2817 &130.3347 &165.4807 &194.4742 &8.5887 &13.1338                                                & \text{1.941} \\
                                                                                    \cline{2-20}
                                                                                    & \multirow{14}{*}{\rotatebox[origin=c]{90}{FIQA}}   & \multicolumn{1}{|l|}{RankIQ\cite{RANKIQ_FIQA}} &16.2083 &32.3135 &10.5877 &11.5525 &7.7161 &9.8220 &0.5777 &0.9286 &21.4691 &22.1840 &32.9593 &43.9097 &134.6810 &153.4910 &7.7368 &11.8907 & \text{1.463} \\
                                                                                    &                                                   & \multicolumn{1}{|l|}{PFE\cite{PFE_FIQA}} &11.8038 &23.5341 &7.4372 &7.9875 &5.4140 &7.1738 &0.6783 &0.9206 &21.4824 &22.1103 &23.7355 &70.2523 &133.0683 &160.0044 &7.0621 &10.6943 & \text{1.294} \\
                                                                                    &                                                   & \multicolumn{1}{|l|}{SER-FIQ\cite{SERFIQ}} &12.9329 &25.7687 &7.4298 &8.2722 &3.4282 &4.8334 &0.7354 &0.9747 &20.9112 &21.7010 &20.1676 &31.2656 &118.0898 &143.3904 &6.3280 &9.6586 & \text{1.119} \\
                                                                                    &                                                   & \multicolumn{1}{|l|}{FaceQnet\cite{hernandez2019faceqnet,faceqnetv1}} &16.8061 &32.0431 &8.6961 &9.5287 &8.2606 &9.7281 &0.8897 &1.0593 &22.5921 &23.3901 &44.2566 &112.0901 &171.8399 &195.8061 &8.2501 &12.3822 & \text{1.730} \\
                                                                                    &                                                   & \multicolumn{1}{|l|}{MagFace\cite{MagFace}} &12.3547 &23.5913 &6.9542 &7.3554 &4.7674 &6.5524 &0.5640 &0.8408 &20.5457 &20.9843 &26.4685 &37.6205 &158.0920 &170.8094 &6.9066 &10.4967 & \text{1.191} \\
                                                                                    &                                                   & \multicolumn{1}{|l|}{SDD-FIQA\cite{SDDFIQA}} &13.2530 &26.4840 &8.7657 &9.3844 &6.1389 &7.5409 &0.6815 &0.9629 &21.4104 &22.0132 &28.2482 &41.1283 &157.9446 &185.3076 &6.9932 &10.5790 & \text{1.333} \\
                                                                                    &                                                   & \multicolumn{1}{|l|}{CR-FIQA(L)\cite{boutros_2023_crfiqa}} &11.7700 &22.9612 &7.3672 &7.8950 &3.2877 &4.7947 &0.6925 &1.0117 &20.1932 &20.7468 &19.2650 &29.2828 &124.8701 &145.2975 &6.3553 &9.7637 & \text{1.087} \\
                                                                                    &                                                   & \multicolumn{1}{|l|}{DifFIQA(R)\cite{10449044}} &12.5725 &25.3111 &8.5525 &9.1986 &3.4596 &4.9557 &0.6848 &0.8695 &20.9897 &21.6053 &18.7803 &28.7571 &127.9433 &149.5653 &6.2401 &9.5956 & \text{1.118} \\
                                                                                    &                                                   & \multicolumn{1}{|l|}{eDifFIQA(L)\cite{babnikTBIOM2024}} &11.1933 &23.1214 &6.5873 &7.0306 &3.0401 &4.5657 &0.6807 &0.8480 &20.2461 &20.8277 &18.7740 &28.7591 &135.8406 &161.2112 &6.1970 &9.5114 & \text{\textbf{1.054}} \\
                                                                                    &                                                   & \multicolumn{1}{|l|}{GraFIQs(L)\cite{grafiqs}} &11.3480 &22.7072 &7.6775 &8.5207 &3.7566 &5.1929 &0.7242 &1.0396 &20.7962 &21.4087 &21.0312 &43.5089 &146.0971 &177.2098 &6.5364 &10.0445 & \text{1.184} \\
                                                                                    &                                                   & \multicolumn{1}{|l|}{CLIB-FIQA\cite{Ou_2024_CVPR}} &11.8083 &24.7242 &7.1435 &7.5612 &3.4113 &5.0553 &0.6740 &0.8424 &20.1961 &20.7819 &19.2309 &29.1490 &129.0720 &159.7752 &6.3966 &9.7010 & \text{1.086} \\
                                                                                    &                                                   & \multicolumn{1}{|l|}{ViT-FIQA(T)\cite{atzori2025vitfiqaassessingfaceimage}} &11.2283 &22.5349 &7.6068 &8.1232 &3.2002 &4.8028 &0.6535 &0.8959 &20.7642 &21.5687 &19.4687 &29.2202 &135.1589 &172.1503 &6.3344 &9.7641 & \text{1.093} \\ \cline{3-20}
                                                                                    &                                                   & \multicolumn{1}{|l|}{\ourmethod(S)} &11.6787 &23.5462 &7.8794 &8.4337 &3.5031 &4.8915 &0.4708 &0.8217 &21.4418 &21.9482 &20.9729 &31.9218 &135.7698 &159.0455 &6.8754 &10.4452 & \text{1.100} \\
                                                                                    &                                                   & \multicolumn{1}{|l|}{\ourmethod(L)} &11.1126 &21.7069 &7.0520 &7.5924 &3.2395 &4.6019 &0.6642 &0.9243 &21.0112 &21.6587 &19.4305 &41.9585 &120.9900 &139.1890 &6.3347 &9.7956 & \text{1.088} \\
     \hline \hline
     \multirow{17}{*}{\rotatebox[origin=c]{90}{MagFace\cite{MagFace}}}      & \multirow{3}{*}{\rotatebox[origin=c]{90}{IQA}}    & \multicolumn{1}{|l|}{BRISQUE\cite{BRISQE_IQA}} &17.6864 &40.0214 &12.0693 &24.8289 &13.9873 &24.4571 &1.1289 &1.5102 &23.8383 &25.2040 &89.6010 &201.1140 &192.3250 &207.3702 &10.2837 &16.0212                                                & \text{2.098} \\
                                                                                    &                                                   & \multicolumn{1}{|l|}{RankIQA\cite{liu2017rankiqa}} &14.9786 &33.8819 &10.9311 &25.7044 &14.0267 &24.4645 &1.1181 &1.5370 &25.1165 &26.2428 &63.3017 &204.2609 &194.5007 &209.4855 &10.1104 &15.8294                                                & \text{1.997} \\
                                                                                    &                                                   & \multicolumn{1}{|l|}{DeepIQA\cite{DEEPIQ_IQA}} &17.7900 &40.0657 &12.1006 &24.5400 &13.1472 &23.5528 &1.1600 &1.6144 &24.7203 &25.8737 &90.9446 &203.1154 &189.0341 &204.6778 &10.4174 &16.0207                                                & \text{2.099} \\
                                                                                    \cline{2-20}
                                                                                    & \multirow{14}{*}{\rotatebox[origin=c]{90}{FIQA}}   & \multicolumn{1}{|l|}{RankIQ\cite{RANKIQ_FIQA}} &14.5745 &35.3153 &11.8940 &23.6708 &11.1641 &20.6461 &0.8117 &1.2068 &22.2224 &23.3407 &37.7717 &120.1296 &162.0601 &178.7525 &9.2168 &13.8719 & \text{1.621} \\
                                                                                    &                                                   & \multicolumn{1}{|l|}{PFE\cite{PFE_FIQA}} &10.9958 &26.8477 &8.5978 &18.7198 &7.2913 &9.9045 &0.8036 &0.9456 &22.1603 &22.7276 &27.1296 &121.1811 &158.3491 &178.0278 &8.4615 &12.4810 & \text{1.299} \\
                                                                                    &                                                   & \multicolumn{1}{|l|}{SER-FIQ\cite{SERFIQ}} &12.1035 &27.3982 &8.6959 &18.4777 &4.9180 &10.3506 &0.8766 &1.3435 &21.6498 &22.1447 &23.5790 &57.9246 &144.1821 &164.7769 &7.6596 &11.3009 & \text{1.199} \\
                                                                                    &                                                   & \multicolumn{1}{|l|}{FaceQnet\cite{hernandez2019faceqnet,faceqnetv1}} &15.6008 &34.0705 &9.5463 &17.7526 &11.0813 &18.2528 &1.0152 &1.1748 &22.9953 &23.5379 &75.4822 &189.0045 &190.3212 &201.4877 &9.6437 &14.3579 & \text{1.794} \\
                                                                                    &                                                   & \multicolumn{1}{|l|}{MagFace\cite{MagFace}} &11.2581 &25.8970 &7.5040 &14.7059 &6.2644 &10.1572 &0.7061 &0.8648 &21.0710 &21.7435 &29.6591 &62.5618 &176.2881 &190.8106 &8.2233 &12.2409 & \text{1.176} \\
                                                                                    &                                                   & \multicolumn{1}{|l|}{SDD-FIQA\cite{SDDFIQA}} &12.1307 &29.6442 &9.5418 &14.1409 &9.5315 &14.1416 &0.8258 &0.9868 &22.1486 &22.9653 &31.4450 &91.4908 &180.6231 &196.5300 &8.4144 &12.4684 & \text{1.366} \\
                                                                                    &                                                   & \multicolumn{1}{|l|}{CR-FIQA(L)\cite{boutros_2023_crfiqa}} &11.3206 &23.4602 &8.1395 &13.9425 &4.9929 &6.3465 &0.8183 &0.9607 &21.0291 &21.6504 &22.2420 &48.2321 &151.1800 &177.1831 &7.7592 &11.4177 & \text{\textbf{1.072}} \\
                                                                                    &                                                   & \multicolumn{1}{|l|}{DifFIQA(R)\cite{10449044}} &11.4815 &28.0995 &9.8176 &19.8918 &5.4472 &11.8393 &0.8145 &0.9903 &21.6867 &22.7119 &22.3511 &63.1215 &151.1309 &176.8266 &7.6029 &11.1622 & \text{1.212} \\
                                                                                    &                                                   & \multicolumn{1}{|l|}{eDifFIQA(L)\cite{babnikTBIOM2024}} &10.6142 &25.6009 &7.6993 &12.5834 &4.7557 &10.9863 &0.8103 &1.0032 &21.0854 &21.5357 &22.1859 &62.8192 &161.2137 &176.4403 &7.5535 &11.0755 & \text{1.127} \\
                                                                                    &                                                   & \multicolumn{1}{|l|}{GraFIQs(L)\cite{grafiqs}} &10.9854 &24.2881 &8.0414 &14.6700 &5.4537 &12.3808 &0.9206 &1.2800 &21.2672 &21.8732 &24.7447 &72.9392 &160.8520 &183.0776 &8.0243 &11.7850 & \text{1.227} \\
                                                                                    &                                                   & \multicolumn{1}{|l|}{CLIB-FIQA\cite{Ou_2024_CVPR}} &11.3012 &27.3331 &8.1278 &13.4630 &5.3174 &11.6884 &0.7986 &0.9927 &20.9668 &21.4732 &22.6654 &63.0537 &149.8783 &180.2327 &7.7146 &11.2779 & \text{1.156} \\
                                                                                    &                                                   & \multicolumn{1}{|l|}{ViT-FIQA(T)\cite{atzori2025vitfiqaassessingfaceimage}} &10.1836 &25.5068 &8.6440 &15.3334 &4.9257 &7.4628 &0.7789 &0.9384 &21.6109 &22.2706 &22.2282 &48.2996 &151.1084 &176.6174 &7.6865 &11.3785 & \text{1.088} \\  \cline{3-20}
                                                                                    &                                                   & \multicolumn{1}{|l|}{\ourmethod(S)} &10.8409 &26.0254 &8.6941 &18.9216 &5.2360 &8.0943 &0.5961 &0.7915 &21.9904 &22.3856 &24.2468 &58.1889 &163.9470 &193.0470 &8.2556 &12.0825 & \text{1.132} \\
                                                                                    &                                                   & \multicolumn{1}{|l|}{\ourmethod(L)} &10.6828 &23.4924 &8.0307 &15.7318 &4.7140 &9.8454 &0.7896 &0.9318 &21.7903 &22.5506 &23.0156 &70.8622 &154.8593 &168.2000 &7.7346 &11.4954 & \text{1.136} \\
     \hline \hline
     \multirow{17}{*}{\rotatebox[origin=c]{90}{CurricularFace\cite{curricularFace}}}      & \multirow{3}{*}{\rotatebox[origin=c]{90}{IQA}}    & \multicolumn{1}{|l|}{BRISQUE\cite{BRISQE_IQA}} &15.1305 &33.9150 &12.0922 &14.8703 &11.1308 &12.6739 &1.0438 &1.2273 &23.4578 &25.2123 &50.4952 &141.1246 &162.8115 &173.6832 &8.2785 &12.4015                                                & \text{1.919} \\
                                                                                    &                                                   & \multicolumn{1}{|l|}{RankIQA\cite{liu2017rankiqa}} &12.7961 &29.8147 &10.9918 &14.8060 &11.1852 &14.7008 &1.0739 &1.2560 &24.3938 &26.4996 &50.7672 &144.9530 &164.5610 &175.5549 &8.1711 &12.2257                                                & \text{1.925} \\
                                                                                    &                                                   & \multicolumn{1}{|l|}{DeepIQA\cite{DEEPIQ_IQA}} &15.1597 &34.0296 &11.5896 &13.9643 &10.2402 &13.6260 &1.1158 &1.2964 &24.2026 &25.1478 &50.5253 &144.4031 &156.9123 &169.8764 &8.3842 &12.4514                                                & \text{1.922} \\
                                                                                    \cline{2-20}
                                                                                    & \multirow{14}{*}{\rotatebox[origin=c]{90}{FIQA}}   & \multicolumn{1}{|l|}{RankIQ\cite{RANKIQ_FIQA}} &12.5206 &28.9732 &11.4410 &13.8973 &9.0876 &12.3471 &0.7481 &0.9286 &21.5436 &23.7314 &31.1507 &44.6895 &132.0292 &152.0204 &7.6540 &11.2293 & \text{1.455} \\
                                                                                    &                                                   & \multicolumn{1}{|l|}{PFE\cite{PFE_FIQA}} &9.5227 &22.0627 &8.4776 &10.4507 &6.4967 &8.7405 &0.7953 &0.9206 &21.7226 &23.1964 &22.5460 &79.3338 &120.1340 &137.7429 &7.0903 &10.2311 & \text{1.296} \\
                                                                                    &                                                   & \multicolumn{1}{|l|}{SER-FIQ\cite{SERFIQ}} &10.4045 &23.2638 &7.7957 &9.4513 &3.8105 &5.8082 &0.8508 &0.9747 &21.1931 &22.6532 &18.4467 &32.7542 &117.7541 &138.2310 &6.2530 &9.2097 & \text{1.088} \\
                                                                                    &                                                   & \multicolumn{1}{|l|}{FaceQnet\cite{hernandez2019faceqnet,faceqnetv1}} &13.6078 &29.7680 &9.6275 &11.6946 &8.5796 &11.4121 &1.0069 &1.1323 &22.7618 &24.3032 &42.4117 &125.9255 &159.2218 &175.3376 &8.0810 &11.6636 & \text{1.707} \\
                                                                                    &                                                   & \multicolumn{1}{|l|}{MagFace\cite{MagFace}} &10.1786 &22.2762 &7.6523 &9.4275 &5.3519 &7.4888 &0.6803 &0.8408 &20.7965 &21.9147 &24.4510 &38.7748 &150.7265 &163.2633 &6.7940 &9.9874 & \text{1.173} \\
                                                                                    &                                                   & \multicolumn{1}{|l|}{SDD-FIQA\cite{SDDFIQA}} &10.5219 &24.3337 &9.4920 &11.5488 &8.3939 &11.1668 &0.7999 &0.9629 &21.7496 &23.4127 &26.2712 &44.8727 &142.4923 &162.1928 &6.9040 &10.0534 & \text{1.353} \\
                                                                                    &                                                   & \multicolumn{1}{|l|}{CR-FIQA(L)\cite{boutros_2023_crfiqa}} &10.1114 &21.0583 &7.5560 &9.5109 &4.0536 &5.9641 &0.8305 &1.0117 &20.7014 &21.3972 &17.3640 &29.9613 &119.2319 &149.5570 &6.3053 &9.2468 & \text{1.076} \\
                                                                                    &                                                   & \multicolumn{1}{|l|}{DifFIQA(R)\cite{10449044}} &9.8056 &23.1090 &9.9488 &11.7492 &3.7724 &5.9817 &0.7889 &0.9299 &21.0650 &22.7619 &17.0462 &29.5376 &124.2980 &141.5128 &6.1846 &9.1629 & \text{1.101} \\
                                                                                    &                                                   & \multicolumn{1}{|l|}{eDifFIQA(L)\cite{babnikTBIOM2024}} &8.9957 &20.3095 &7.5763 &8.9476 &3.5200 &5.6930 &0.7847 &0.9082 &20.5967 &21.9945 &16.9471 &29.4616 &131.5937 &148.3697 &6.1640 &9.0445 & \text{\textbf{1.039}} \\
                                                                                    &                                                   & \multicolumn{1}{|l|}{GraFIQs(L)\cite{grafiqs}} &9.6944 &20.7428 &7.4491 &9.1992 &4.0808 &6.1433 &0.8795 &1.0396 &20.8857 &22.5015 &19.5003 &47.3121 &125.1931 &141.4252 &6.4942 &9.4559 & \text{1.130} \\
                                                                                    &                                                   & \multicolumn{1}{|l|}{CLIB-FIQA\cite{Ou_2024_CVPR}} &9.7681 &21.7309 &8.1028 &9.6342 &3.8405 &6.0757 &0.7903 &0.9149 &20.4888 &21.8968 &17.3670 &29.9729 &123.1865 &141.8351 &6.3213 &9.2505 & \text{1.067} \\
                                                                                    &                                                   & \multicolumn{1}{|l|}{ViT-FIQA(T)\cite{atzori2025vitfiqaassessingfaceimage}} &8.8986 &20.8904 &8.6058 &10.5926 &3.9729 &5.8003 &0.7705 &0.8959 &21.4389 &22.7171 &17.3040 &29.5901 &124.9113 &144.6134 &6.3369 &9.3218 & \text{1.072} \\ \cline{3-20}
                                                                                    &                                                   & \multicolumn{1}{|l|}{\ourmethod(S)} &9.4222 &21.7211 &8.5354 &10.4881 &4.3035 &6.2893 &0.6411 &0.8217 &21.6933 &23.2785 &19.0764 &33.4702 &133.9656 &152.6510 &6.8556 &9.9116 & \text{1.105} \\
                                                                                    &                                                   & \multicolumn{1}{|l|}{\ourmethod(L)} &9.4004 &20.7287 &7.9431 &9.4951 &3.7846 &5.4664 &0.7813 &0.9243 &21.4933 &22.2463 &17.6858 &45.2779 &122.3199 &149.8777 &6.2979 &9.3203 & \text{1.092} \\
     \hline \hline
     \multirow{17}{*}{\rotatebox[origin=c]{90}{\text{Average}}}      & \multirow{3}{*}{\rotatebox[origin=c]{90}{\text{IQA}}}    & \multicolumn{1}{|l|}{\text{BRISQUE\cite{BRISQE_IQA}}} & \text{17.2921} & \text{37.4839} & \text{11.6161} & \text{17.2688} & \text{11.5877} & \text{16.5614} & \text{1.0216} & \text{1.2980} & \text{23.6475} & \text{25.3678} & \text{63.4290} & \text{136.6868} & \text{176.6849} & \text{195.2231} & \text{8.9241} & \text{13.8416} & \text{1.944} \\
     &                                                   & \multicolumn{1}{|l|}{\text{RankIQA\cite{liu2017rankiqa}}} & \text{14.6562} & \text{32.8486} & \text{10.6554} & \text{17.2882} & \text{11.4028} & \text{16.8742} & \text{1.0417} & \text{1.3262} & \text{24.8283} & \text{26.5830} & \text{57.1295} & \text{140.1877} & \text{178.6741} & \text{197.4032} & \text{8.8166} & \text{13.6543} & \text{1.904} \\
     &                                                   & \multicolumn{1}{|l|}{\text{DeepIQA\cite{DEEPIQ_IQA}}} & \text{17.4275} & \text{37.8800} & \text{11.4080} & \text{16.7940} & \text{10.6134} & \text{15.6658} & \text{1.0842} & \text{1.3759} & \text{24.4935} & \text{25.9072} & \text{63.7045} & \text{139.1889} & \text{172.4577} & \text{192.2771} & \text{9.0438} & \text{13.8877} & \text{1.935} \\ \cline{2-20}
     & \multirow{14}{*}{\rotatebox[origin=c]{90}{\text{FIQA}}}   & \multicolumn{1}{|l|}{\text{RankIQ\cite{RANKIQ_FIQA}}} & \text{14.4688} & \text{33.0985} & \text{11.1557} & \text{16.5629} & \text{9.0728} & \text{14.0290} & \text{0.7214} & \text{0.9982} & \text{21.9393} & \text{23.6720} & \text{34.1121} & \text{64.2925} & \text{144.2126} & \text{164.1697} & \text{8.1264} & \text{12.2979} & \text{1.501} \\
     &                                                   & \multicolumn{1}{|l|}{\text{PFE\cite{PFE_FIQA}}} & \text{10.7654} & \text{24.8835} & \text{8.1809} & \text{12.4582} & \text{6.2738} & \text{8.6987} & \text{0.7681} & \text{0.9269} & \text{21.9054} & \text{23.0988} & \text{25.0038} & \text{78.3814} & \text{138.5025} & \text{161.9303} & \text{7.5209} & \text{11.1023} & \text{1.274} \\
     &                                                   & \multicolumn{1}{|l|}{\text{SER-FIQ\cite{SERFIQ}}} & \text{11.7671} & \text{25.9661} & \text{7.9244} & \text{12.1211} & \text{3.9885} & \text{6.8244} & \text{0.8158} & \text{1.0669} & \text{21.4517} & \text{22.6753} & \text{20.9409} & \text{39.2575} & \text{\textbf{128.0986}} & \text{\textbf{150.8112}} & \text{6.6921} & \text{10.0655} & \text{1.127} \\
     &                                                   & \multicolumn{1}{|l|}{\text{FaceQnet\cite{hernandez2019faceqnet,faceqnetv1}}} & \text{15.3219} & \text{32.8376} & \text{9.1686} & \text{12.9199} & \text{9.2325} & \text{12.7158} & \text{0.9797} & \text{1.1247} & \text{22.9221} & \text{24.2385} & \text{53.2579} & \text{123.0745} & \text{176.1318} & \text{193.7110} & \text{8.6191} & \text{12.7754} & \text{1.697} \\
     &                                                   & \multicolumn{1}{|l|}{\text{MagFace\cite{MagFace}}} & \text{11.2363} & \text{24.8217} & \text{7.3846} & \text{10.5763} & \text{5.3338} & \text{7.9233} & \text{0.6577} & \text{0.8468} & \text{20.8699} & \text{21.8679} & \text{27.0610} & \text{44.7364} & \text{161.4849} & \text{178.8523} & \text{7.2695} & \text{10.9020} & \text{1.170} \\
     &                                                   & \multicolumn{1}{|l|}{\text{SDD-FIQA\cite{SDDFIQA}}} & \text{11.9374} & \text{27.5554} & \text{9.1053} & \text{11.3156} & \text{7.9687} & \text{11.3369} & \text{0.7768} & \text{0.9689} & \text{21.9157} & \text{23.1591} & \text{29.2775} & \text{54.8726} & \text{160.0526} & \text{180.7451} & \text{7.3869} & \text{11.0349} & \text{1.339} \\
     &                                                   & \multicolumn{1}{|l|}{\text{CR-FIQA(L)\cite{boutros_2023_crfiqa}}} & \text{11.0256} & \text{22.4580} & \text{7.6669} & \text{10.3329} & \text{3.9986} & \text{\textbf{5.8178}} & \text{0.7878} & \text{0.9989} & \text{20.7151} & \text{\textbf{21.4380}} & \text{19.8112} & \text{35.1694} & \text{133.8246} & \text{157.7913} & \text{6.7498} & \text{10.1356} & \text{1.069} \\
     &                                                   & \multicolumn{1}{|l|}{\text{DifFIQA(R)\cite{10449044}}} & \text{11.2715} & \text{26.4102} & \text{9.3969} & \text{13.6421} & \text{4.1526} & \text{7.3710} & \text{0.7693} & \text{0.9299} & \text{21.3856} & \text{22.8614} & \text{19.5907} & \text{38.7417} & \text{135.2986} & \text{156.6299} & \text{6.6275} & \text{9.9482} & \text{1.141} \\
     &                                                   & \multicolumn{1}{|l|}{\text{eDifFIQA(L)\cite{babnikTBIOM2024}}} & \text{10.2533} & \text{23.6296} & \text{\textbf{7.1858}} & \text{\textbf{9.3600}} & \text{\textbf{3.7155}} & \text{6.8231} & \text{0.7651} & \text{0.9169} & \text{20.7351} & \text{21.9559} & \text{\textbf{19.4982}} & \text{38.5968} & \text{142.7411} & \text{163.2074} & \text{\textbf{6.5960}} & \text{\textbf{9.8555}} & \text{\textbf{1.065}} \\
     &                                                   & \multicolumn{1}{|l|}{\text{GraFIQs(L)\cite{grafiqs}}} & \text{10.6423} & \text{22.8738} & \text{7.7212} & \text{10.8559} & \text{4.4097} & \text{7.7050} & \text{0.8411} & \text{1.0997} & \text{21.0936} & \text{22.4208} & \text{21.9428} & \text{50.3573} & \text{144.1127} & \text{165.0988} & \text{6.9794} & \text{10.3948} & \text{1.163} \\
     &                                                   & \multicolumn{1}{|l|}{\text{CLIB-FIQA\cite{Ou_2024_CVPR}}} & \text{10.9522} & \text{25.2768} & \text{7.6903} & \text{10.0235} & \text{4.1597} & \text{7.3971} & \text{0.7633} & \text{0.9162} & \text{\textbf{20.6788}} & \text{21.8731} & \text{19.9236} & \text{38.9506} & \text{134.8840} & \text{158.1936} & \text{6.7570} & \text{10.0465} & \text{1.094} \\
     &                                                   & \multicolumn{1}{|l|}{\text{ViT-FIQA(T)\cite{atzori2025vitfiqaassessingfaceimage}}} & \text{\textbf{10.0647}} & \text{23.6491} & \text{8.2726} & \text{11.1958} & \text{3.9168} & \text{5.9321} & \text{0.7434} & \text{0.9065} & \text{21.3962} & \text{22.5425} & \text{19.8830} & \text{\textbf{35.1244}} & \text{137.9110} & \text{162.4139} & \text{6.7301} & \text{10.1457} & \text{1.075} \\ \cline{3-20}
     &                                                   & \multicolumn{1}{|l|}{\text{\ourmethod(S)}} & \text{10.6131} & \text{24.4598} & \text{8.3627} & \text{12.5906} & \text{4.1878} & \text{6.2847} & \text{\textbf{0.5740}} & \text{\textbf{0.8142}} & \text{21.8141} & \text{23.0343} & \text{21.5416} & \text{39.6972} & \text{145.3297} & \text{166.4868} & \text{7.2623} & \text{10.7849} & \text{1.104} \\
     &                                                   & \multicolumn{1}{|l|}{\text{\ourmethod(L)}} & \text{10.3821} & \text{\textbf{22.2977}} & \text{7.6498} & \text{10.8506} & \text{3.8887} & \text{6.4677} & \text{0.7541} & \text{0.9262} & \text{21.5529} & \text{22.3777} & \text{20.5959} & \text{48.3652} & \text{132.2035} & \text{154.9409} & \text{6.7339} & \text{10.1699} & \text{1.094} \\
     \hline
    \end{tabular}}
    \end{center}
    \vspace{-3mm}
    \end{table*}

\section{Methodology}
\label{sec:methodology}

This section presents our proposed \ourmethod that leverages quality training targets accumulated over multiple training epochs to achieve stable FIQA. Unlike methods that rely on single-epoch measurements \cite{MagFace, PFE_FIQA, boutros_2023_crfiqa}, our approach aggregates relative point margin values across the training trajectory, providing temporally-stable quality estimates. \vspace{-2mm}

\subsection{Limitations of Single-Epoch Quality Estimation}
\label{subsec:motivation}

As discussed in Sec. \ref{subsec:related_fiqa}, existing FR-integrated FIQA methods estimate quality at specific training iterations or final epochs. For instance, CR-FIQA \cite{boutros_2023_crfiqa} computes quality as \( q(x) = \text{CCS}(x)/\text{NNCCS}(x) \) at each epoch, where CCS and NNCCS represent the sample's similarity to its own class center versus competing class centers. However, during training, each epoch generates a different induced class clustering as the feature space evolves. This makes \( q_j(x) \) vary across epochs \( j \), creating a moving target that leads to: (1) fluctuating training objectives that hamper optimization stability \cite{DBLP:conf/nips/GrillSATRBDPGAP20, DBLP:conf/icml/SutskeverMDH13}, (2) unstable sample quality rankings across training iterations. Our cumulative approach addresses these issues by aggregating quality measurements across the entire training trajectory, reducing dependency on any single epoch's clustering structure. \vspace{-2mm}

\subsection{Cumulative Average of Relative Point Margin (CARPM)}
\label{subsec:carpm} A cumulative average (\( CA \)), is a running average of a sequence of values that is updated incrementally as new data points are added \cite{ya1963statistical}. At each step, the cumulative average is computed as the average of all previous values up to that point. Mathematically, for a sequence of values \( x_1, x_2, \dots, x_n \), the cumulative average at step \( n \) is given by:
\begin{equation}
    CA_n = \frac{1}{n} \sum_{i=1}^{n} x_i = \frac{x_{n}+(n-1) CA_{n-1}}{n}
    \label{eq:cumulative_average}.
\end{equation}
In the context of training a FR model, we may view the model's training process as an induced dynamic class clustering. Since each iteration generates a different induced class clustering due to the constantly evolving nature of the learned feature space, the target values, such as the relative point margin \cite{DBLP:conf/nips/Ben-DavidA08}, are subject to change. This results in fluctuating rankings of image qualities, making it difficult to establish stable target values for optimization, illustrated in Fig. \ref{fig:high_quality_id6987}. To mitigate this issue and stabilize the training, we propose using the cumulative average of the relative point margin (CARPM) as the target during training. By averaging the target values over multiple epochs, we smooth out the fluctuations caused by the dynamic nature of the induced class clustering. For each sample \( x \) at epoch \( n \), we calculate its cumulative average quality score \( Q(x)_n \) as follows:
\begin{equation}
    Q(x)_n = \frac{1}{n} \sum_{j=1}^{n} q_j(x)
    \label{eq:carpm},
\end{equation}
where \( q_j(x) \) is the quality score for sample \( x \) at epoch \( j \). Given the dynamic nature of induced feature clustering during training, we can model the quality score at epoch $n$ as:
\begin{equation}
    q_n(x) = Q^*(x) + \varepsilon_n
    \label{eq:quality_model},
\end{equation}
where $Q^*(x)$ represents the "true" quality value and $\varepsilon_n$ is the noise from induced clustering variations at epoch $n$ and sample $x$. We can conceptually characterize this noise as if we were to train our model $m$ times with different initializations, which would yield $m$ different quality scores $q_n^1(x), q_n^2(x), ..., q_n^m(x)$ for the sample $x$ at epoch $n$. Such variations would allow to empirically estimate $\varepsilon_n$ distribution.

\textbf{Theoretical Analysis of CARPM:} We now provide a theoretical analysis of the cumulative average of relative point margin (CARPM), demonstrating its statistical advantages over single-epoch quality estimates. We begin with a formal characterization of the quality estimation problem.

\begin{assumption}[Quality Score Noise Model]
\label{assump:noise_model}
Assuming both $q_n(x)$ and $Q^*(x)$ are appropriately normalized, we model the quality score at epoch $n$ as $q_n(x) = Q^*(x) + \varepsilon_n$, where $\varepsilon_n \sim \mathcal{N}(0, \sigma_n^2)$ is a normally distributed error term with zero mean.
\end{assumption}

This noise model captures the stochastic nature of the training process and the resulting variability in quality estimates. The zero-mean property implies that the quality score $q_n(x)$ is an unbiased estimate of the true quality $Q^*(x)$, i.e., $\EX[q_n(x)] = \EX[Q^*(x)]$. This is reasonable because during training, the network optimizes to minimize classification loss, tending to produce embeddings that correctly represent the identities. While estimations fluctuate, these fluctuations are equally likely to over or underestimate the true quality rather than showing systematic bias in either direction.

\begin{assumption}[Decaying Epoch-to-Epoch Correlation]
\label{assump:correlated_noise}
The noise variance is stationary across epochs ($\sigma_n^2 = \sigma^2$ for all $n$), and the correlation between $\varepsilon_i$ and $\varepsilon_j$ decays with epoch distance following an AR(1)-type structure: $Cov(\varepsilon_i, \varepsilon_j) = \rho^{|i-j|}\sigma^2$ for some $\rho \in [0,1)$. Uncorrelated errors are the special case $\rho = 0$.
\end{assumption}

We deliberately do not assume the errors $\varepsilon_i$ and $\varepsilon_j$ to be independent across epochs, which would be debatable due to sequential parameter updates; we only require their correlation to fade with epoch distance. This weaker requirement is supported by: (1) stochasticity from mini-batch selection and data augmentation; (2) a direct empirical lag-correlation analysis, discussed after Property \ref{prop:variance} and empirical validation; and (3) the fact that, as shown below, the statistical benefits of averaging survive under it, with constants inflated by $(1+\rho)/(1-\rho)$ relative to the uncorrelated case. 

\begin{property}[Unbiasedness of CARPM]
\label{prop:unbiased}
Under Assumption \ref{assump:noise_model}, the cumulative average quality score $Q(x)_n$ is an unbiased estimator of the true quality $Q^*(x)$.
\end{property}

\begin{proof}
The cumulative average quality score after $n$ epochs is:
\begin{footnotesize}
\begin{equation}
Q(x)_n = \frac{1}{n} \sum_{j=1}^{n} q_j(x) = \frac{1}{n} \sum_{j=1}^{n} (Q^*(x) + \varepsilon_j) = Q^*(x) + \frac{1}{n} \sum_{j=1}^{n} \varepsilon_j.\end{equation} 
\end{footnotesize}
Assuming the noise terms $\varepsilon_j$ have zero mean and finite variance $\sigma_j^2$, the expected value of this estimate becomes:
\begin{footnotesize}
    \begin{align}
        \EX[Q(x)_n]   &= \EX\left[Q^*(x) + \frac{1}{n} \sum_{j=1}^{n} \varepsilon_j\right]  =\EX[Q^*(x)] + \frac{1}{n} \sum_{j=1}^{n} \EX[\varepsilon_j] \\
                    &= Q^*(x) + 0\
                    = Q^*(x).
\end{align}
\end{footnotesize}

Hence, our proposed metric $Q(x)_n$ is unbiased if $q_n(x)$ is an unbiased estimate of the true quality $Q^*(x)$.
\end{proof}

\begin{property}[Reduced Variance Property]
\label{prop:variance}
Under Assumptions \ref{assump:noise_model} and \ref{assump:correlated_noise}, the variance of $Q(x)_n$ decreases at a rate proportional to $1/n$ as the number of epochs increases, with
\begin{footnotesize}
\begin{align}
\label{eq:corr_variance_exact}
Var(Q(x)_n) &= \frac{1}{n^2}\left[n\sigma^2 + 2\sigma^2\sum_{k=1}^{n-1}(n-k)\rho^k\right] \\&\xrightarrow{n\to\infty} \frac{\sigma^2(1+\rho)}{1-\rho}\cdot\frac{1}{n}.
\end{align}
\end{footnotesize}
\end{property}
\begin{proof}
As $Var(aX+b)=a^2Var(X)$, the variance of the cumulative average is
\begin{footnotesize}
\begin{align}
        Var(Q(x)_n) &= Var(Q^*(x))+Var\left(\frac{1}{n} \sum_{j=1}^{n} \varepsilon_j\right) \\
        &= 0 + \frac{1}{n^2}\sum_{i,j=1}^n Cov(\varepsilon_i,\varepsilon_j)
                    = \frac{1}{n^2}\sum_{i,j=1}^n \rho^{|i-j|}\sigma^2.
\end{align}\end{footnotesize}
Grouping terms by lag $k=|i-j|$, there are $n$ terms at lag $0$ and $2(n-k)$ index pairs at each lag $k=1,\dots,n-1$, giving Eq. \ref{eq:corr_variance_exact}'s finite-$n$ expression directly. As $n\to\infty$, $\frac{1}{n}\sum_{k=1}^{n-1}(n-k)\rho^k \to \sum_{k=1}^{\infty}\rho^k = \frac{\rho}{1-\rho}$, so $Var(Q(x)_n) \sim \frac{\sigma^2}{n}\left[1 + \frac{2\rho}{1-\rho}\right] = \frac{\sigma^2}{n}\cdot\frac{1+\rho}{1-\rho}$. In the uncorrelated special case $\rho=0$, this reduces to the familiar $Var(Q(x)_n) = \sigma^2/n$.
This shows that as $n$ increases, the variance of our quality estimate decreases proportionally to $1/n$, providing a more stable target for optimization.
\end{proof}

The decay in Assumption \ref{assump:correlated_noise} is exactly what Property \ref{prop:variance} needs: the $O(1/n)$ variance reduction survives any correlation structure that \textit{decays} with epoch distance, merely inflated by a constant factor $(1+\rho)/(1-\rho) \geq 1$ relative to the uncorrelated case. This is not automatic, however: under an equicorrelated model with no decay ($Cov(\varepsilon_i,\varepsilon_j) = \rho\sigma^2$ for all $i \neq j$, i.e., a single shared source of noise rather than one that fades with temporal distance), the same derivation instead gives $Var(Q(x)_n) \to \rho\sigma^2$ as $n \to \infty$, a hard variance floor rather than continued reduction. The relevant empirical question is therefore not whether epoch-to-epoch correlation exists, sequential optimization makes some correlation plausible, as discussed, but whether it decays with epoch distance, as Assumption \ref{assump:correlated_noise} requires. We verify this directly: computing the lag-averaged empirical correlation of per-epoch relative point margin residuals, after removing both the epoch-wise shift/compression documented in Sec. \ref{sec:results}, across all eight reconstructable training configurations, we find that correlation decays approximately geometrically with epoch lag, with a fitted rate that depends on training scale: $\hat\rho \approx 0.77$ for the small protocol (34 epochs, inflation factor $(1{+}\hat\rho)/(1{-}\hat\rho) \approx 7.7$) and $\hat\rho \approx 0.42$ for the large protocol (18 epochs, inflation factor $\approx 2.5$), consistent across both ArcFace and CurricularFace loss and both cumulative and non-cumulative targets, training configurations described in Sec. \ref{sec:exp}. Both values fall into the decaying-correlation regime of Assumption \ref{assump:correlated_noise} rather than the equicorrelated worst case, supporting that cumulative averaging retains its $O(1/n)$ stabilizing effect under realistic, non-independent training dynamics, at a $2.5$-$7.7\times$ larger constant than the idealized uncorrelated case.

\begin{property}[Lower Mean Squared Error]
\label{prop:mse}
Under Assumptions \ref{assump:noise_model} and \ref{assump:correlated_noise}, the mean squared error of $Q(x)_n$ in approximating $Q^*(x)$ is lower than that of any single-epoch estimate $q_n(x)$ by a factor proportional to $1/n$.
\end{property}

\begin{proof}
For any estimator using a single epoch's measurement $q_n(x)$, the expected squared error is:
\begin{equation}\EX[(q_n(x) - Q^*(x))^2] = \EX[\varepsilon_n^2] = \sigma^2.\end{equation}

In contrast, since $Q(x)_n$ is an unbiased estimator of $Q^*(x)$ (Property \ref{prop:unbiased}), its expected squared error equals its variance, which Property \ref{prop:variance} bounds:
\begin{footnotesize}
\begin{align*}
     \EX[(Q(x)_n - Q^*(x))^2] &= Var(Q(x)_n) \sim \frac{1+\rho}{1-\rho}\cdot\frac{\sigma^2}{n}\\
                            &< \sigma^2 = \EX[(q_n(x) - Q^*(x))^2] \text{ for } n > \tfrac{1+\rho}{1-\rho},
\end{align*}
\end{footnotesize}
reducing to exactly $\sigma^2/n$ in the uncorrelated special case $\rho = 0$.
The cumulative average approach reduces estimation error by a factor proportional to $n$ compared to using single-epoch measurements.
\end{proof}

\begin{proposition}[Convergence of Ranking Probability]
\label{thm:ranking}
Under Assumptions \ref{assump:noise_model} and \ref{assump:correlated_noise}, for any two samples $x_a$ and $x_b$ with true qualities $Q^*(x_a) > Q^*(x_b)$, the probability of incorrect ranking using $Q(x)_n$ decreases at a rate of at least $1/n$ and approaches zero as $n \rightarrow \infty$.
\end{proposition}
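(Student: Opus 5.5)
The plan is to reduce the ranking error to a tail bound on the difference of two noise averages and then apply Chebyshev's inequality, using the variance computed in Property~\ref{prop:variance}.

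\textbf{Setup.} Write $\Delta = Q^*(x_a) - Q^*(x_b) > 0$. By the decomposition in the proof of Property~\ref{prop:unbiased}, $Q(x)_n = Q^*(x) + \bar\varepsilon_n(x)$ with $\bar\varepsilon_n(x) = \frac{1}{n}\sum_{j=1}^n \varepsilon_j(x)$. An incorrect ranking is the event $Q(x_a)_n \le Q(x_b)_n$. This event is equivalent to $D_n := \bar\varepsilon_n(x_b) - \bar\varepsilon_n(x_a) \ge \Delta$.

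\textbf{Mean and variance of $D_n$.} By Property~\ref{prop:unbiased}, $\EX[D_n]=0$. For the variance, I will not assume the two samples' noise sequences are independent, since they share the same training run. Instead I use $Var(X-Y) \le (\sqrt{Var X}+\sqrt{Var Y})^2 \le 2Var X + 2Var Y$. Combining this with Property~\ref{prop:variance}, applied to each sample with its own $\sigma_a,\sigma_b$ and a common decay rate $\rho$, gives
\begin{equation*}
Var(D_n) \le \frac{2}{n^2}\sum_{i,j=1}^n \rho^{|i-j|}(\sigma_a^2+\sigma_b^2) \le \frac{2(\sigma_a^2+\sigma_b^2)}{n}\cdot\frac{1+\rho}{1-\rho}.
\end{equation*}
The last inequality holds for every finite $n$, not only asymptotically. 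It follows from $\frac{1}{n}\left[n + 2\sum_{k=1}^{n-1}(n-k)\rho^k\right] \le 1+2\sum_{k\ge1}\rho^k = \frac{1+\rho}{1-\rho}$, since $\rho\ge 0$. If the samples' noises are independent, the factor $2$ can be dropped.

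\textbf{Conclusion.} Chebyshev's inequality, or its one-sided Cantelli form, then gives
\begin{equation*}
P\big(Q(x_a)_n \le Q(x_b)_n\big) \le P(|D_n|\ge \Delta) \le \frac{2(\sigma_a^2+\sigma_b^2)(1+\rho)}{(1-\rho)\,\Delta^2}\cdot\frac{1}{n} \xrightarrow{n\to\infty} 0.
\end{equation*}
This is the claimed $O(1/n)$ rate, interpreting ``at least $1/n$'' as an upper bound of order $1/n$, and it tends to zero.

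\textbf{Optional sharper rate.} Under Assumption~\ref{assump:noise_model}, $D_n$ is a linear combination of Gaussians. If the noise is jointly Gaussian, $D_n$ is exactly Gaussian, and $P(D_n \ge \Delta) = \Phi(-\Delta/\sqrt{Var D_n}) \le \exp(-c\,n\Delta^2)$, which is exponentially small. I would state the Chebyshev bound as the main result, since it needs no joint Gaussianity, and mention the Gaussian bound as a remark.

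\textbf{Main obstacle.} The difficulty is the cross-sample dependence: Assumption~\ref{assump:correlated_noise} specifies only within-sample covariances. I handle this with the crude $2(Var X + Var Y)$ bound above, which needs no assumption on cross-covariances and only worsens the constant. A second point to check is that a single noise variance $\sigma^2$ need not be shared across samples; allowing per-sample $\sigma_a,\sigma_b$ as above covers this.
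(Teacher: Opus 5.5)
Your proof is correct and follows the paper's route. You write the ranking error as the event that the averaged noise difference exceeds $\Delta$, bound its variance by lag-grouping the AR(1) covariances, and apply Chebyshev; your explicit finite-$n$ bound $\frac{1}{n}\bigl[n+2\sum_{k=1}^{n-1}(n-k)\rho^k\bigr]\le\frac{1+\rho}{1-\rho}$ is what the paper's ``$\leq$'' uses implicitly.

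The only real difference is in the cross-sample step. The paper asserts, inside the proof, that the two samples' noise trajectories are independent, so that $Z_j=\varepsilon_j^b-\varepsilon_j^a$ inherits the AR(1) structure with variance $\sigma_Z^2$. Your bound $Var(D_n)\le 2Var(\bar\varepsilon_n(x_a))+2Var(\bar\varepsilon_n(x_b))$ avoids that hypothesis, which is not part of Assumptions~\ref{assump:noise_model} and~\ref{assump:correlated_noise}, at the cost of a factor $2$ in the constant.
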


\begin{proof}
Consider two samples $x_a$ and $x_b$ with true qualities $Q^*(x_a) > Q^*(x_b)$, and define $\delta = Q^*(x_a) - Q^*(x_b) > 0$ as their true quality difference. For a single epoch $j$, the probability of incorrect ranking is:
\begin{equation}
P(q_j(x_a) < q_j(x_b)) = P(\varepsilon_j^b - \varepsilon_j^a > \delta).\end{equation}
If we denote $Z_j = \varepsilon_j^b - \varepsilon_j^a$, then $\EX[Z_j] = 0$ by unbiased assumption and $Var(Z_j) := \sigma_Z^2$. Using Chebyshev's inequality \cite{AP91}:
\begin{equation}
P(Z_j > \delta) \leq P(|Z_j| > \delta) \leq \frac{\sigma_Z^2}{\delta^2}.\end{equation}
This establishes a lower bound on the ranking accuracy for a single epoch, which depends solely on the ratio of noise variance to the squared true quality difference. Whereas, the probability of incorrect ranking for cumulative averaging is:
\begin{equation}
P(Q(x_a)_n < Q(x_b)_n) = P\left(\frac{1}{n}\sum_{j=1}^{n}(\varepsilon_j^b - \varepsilon_j^a) > \delta\right).\end{equation}
Define $\bar{Z}_n = \frac{1}{n}\sum_{j=1}^{n}Z_j$, with $\EX[\bar{Z}_n] = 0$ from our unbiased assumption. Since the noise trajectories of the two distinct samples are drawn independently of each other, $Z_j$ inherits the epoch-correlation structure of Assumption \ref{assump:correlated_noise}: $Cov(Z_i, Z_j) = \rho^{|i-j|}\sigma_Z^2$, where $\sigma_Z^2$ is the variance of $Z_j$. The same lag-grouping argument as in the proof of Property \ref{prop:variance} therefore gives $Var(\bar{Z}_n) \leq \frac{1+\rho}{1-\rho}\cdot\frac{\sigma_Z^2}{n}$, and by Chebyshev's inequality:
\begin{equation}
P(|\bar{Z}_n - \EX[\bar{Z}_n]| \geq \delta) \leq \frac{Var(\bar{Z}_n)}{\delta^2} \leq \frac{1+\rho}{1-\rho}\cdot\frac{\sigma_Z^2}{n\delta^2}.\end{equation}
Since incorrect ranking requires $\bar{Z}_n > \delta$, we have:
\begin{align}
P(Q(x_a)_n < Q(x_b)_n) &= P(\bar{Z}_n > \delta) \leq P(|\bar{Z}_n| \geq \delta)\\
&\leq \frac{1+\rho}{1-\rho}\cdot\frac{\sigma_Z^2}{n\delta^2} \xrightarrow{n\to\infty} 0.
\end{align}
This establishes that the probability of incorrect ranking decreases at least as fast as $1/n$ and approaches zero as $n$ increases.
\end{proof}

By incorporating the cumulative average, we achieve \textbf{reduced variance in quality estimates} (Property \ref{prop:variance}), \textbf{lower mean squared error in approximating true quality} (Property \ref{prop:mse}), and \textbf{more consistent ranking of samples} (Proposition \ref{thm:ranking}). These properties ensure that target values remain more stable throughout training, reducing the impact of transient changes in the induced class clustering structure and enabling the model to learn a more consistent representation of FIQ.

\textbf{Quality Regression Model for Inference:} While the CARPM provides a robust quality measure, it is only available for samples in the training dataset where class centers are known. In practical applications, we need to assess the quality of previously unseen face images. To address this limitation, we develop a regression model that can predict the CARPM for any given face image. Our approach simultaneously trains a FR model and a quality regression branch. The FR model learns identity-discriminative features through ArcFace loss \cite{deng2019arcface}, while the quality regression branch learns to predict the cumulative average quality scores from the embeddings. During training, we track and accumulate the relative point margin values for each sample across epochs, providing increasingly stable quality targets as training progresses. Specifically, we extend an FR backbone network with a regression head that takes the face embeddings as input and outputs predicted quality scores. For each training sample $x_i$ at epoch $n$, we compute: (a) The current relative point margin $q_n(x_i)$, (b) the cumulative average quality score $Q(x_i)_n$ using all previous epochs, (c) the predicted quality score from the regression head $\hat{Q}(x_i)_n$. The model is trained with a combined loss function:
\begin{equation}
\label{eq:total_loss}
   \mathcal{L} = \mathcal{L}_{Arc} + \lambda \mathcal{L}_{Q},
\end{equation}
where $\mathcal{L}_{Arc}$ is the ArcFace loss \cite{deng2019arcface}, and $\mathcal{L}_{Q}$ is the quality prediction loss measuring the difference between the predicted scores and the cumulative average targets. We use Smooth L1 loss for $\mathcal{L}_{Q}$ due to its robustness to outliers and stable convergence properties by following \cite{boutros_2023_crfiqa}:
\begin{equation}
\label{eq:quality_loss}
\mathcal{L}_{Q} = \frac{1}{N}\sum\limits_{i=1}^{N} \ell(Q(x_i)_n, \hat{Q}(x_i)_n),
\end{equation}
where the loss function $\ell$ for each sample is defined as:
\begin{equation}
\ell(x, y) =
\begin{cases}
\frac{0.5 (x - y)^2}{\beta}, & \text{if } |x - y| < \beta \\
|x - y| - 0.5\beta, & \text{otherwise}
\end{cases},
\end{equation}
with $\beta = 0.5$ chosen as the threshold parameter that determines the transition between L1 and L2 behaviour, and $\lambda$ is a hyperparameter that balances the recognition and quality prediction objectives. We set $\lambda=10$ for all experiments based on preliminary studies showing this provides an appropriate balance between the two learning tasks \cite{boutros_2023_crfiqa}. This training paradigm enables our model to predict CARPM values for unseen face images. By learning from the aggregated quality measurements captured during FR training, the quality regression branch can effectively generalize to new samples. Algorithm \ref{alg:carpm_fiqa} summarizes the complete training procedure for \ourmethod, detailing how cumulative quality scores are computed and used to train the regression model alongside the FR objective.

\section{Experimental setup}
\label{sec:exp}

\textbf{Model training and implementation details:}
We evaluate our proposed method, \ourmethod, under two distinct protocols, small and large, based on the choice of training datasets and model architectures, following \cite{MagFace,boutros_2023_crfiqa,grafiqs}. Both protocols utilize common FR architectures, ResNet50 and ResNet100 \cite{DBLP:conf/cvpr/HeZRS16}, with modifications as specified in Sec. \ref{sec:methodology}. The small protocol employs ResNet50 trained on CASIA-WebFace \cite{casia_webface} (denoted as \ourmethod(S)), while the large protocol utilizes ResNet100 trained on MS1MV2 \cite{guo2016ms,deng2019arcface} (denoted as \ourmethod(L)). Both networks are initialized with Xavier initialization. The MS1MV2 dataset, a refined version of MS-Celeb-1M \cite{guo2016ms} by \cite{deng2019arcface}, consists of around 5.82 million images of 85,742 identities, whereas CASIA-WebFace contains 494,414 images of  10,575 identities \cite{casia_webface}. The training follows ArcFace settings \cite{deng2019arcface}, using a scale parameter $s$ of 64 and a margin $m$ of 0.5. Models are trained with Stochastic Gradient Descent (SGD) at an initial learning rate of 1e-1, a mini-batch size of 512, a momentum of 0.9, and a weight decay 5e-4. Data augmentation includes only random horizontal flipping with a probability of 0.5. For \ourmethod(S), the learning rate is reduced by a factor of 10 at 20K and 28K iterations, with training stopping at 32K iterations. For \ourmethod(L), the learning rate reduction occurs at 100K and 160K iterations, with training ending at 180K iterations. All images are aligned and cropped to $112 \times 112$ \cite{deng2019arcface} and normalized to pixel values between -1 and 1. All experiments are conducted using PyTorch 1.7.1 \cite{NEURIPS2019_9015} and trained on one Linux machine (Ubuntu 20.04.2 LTS) with 4 Nvidia HGX A100 GPUs with 40GB VRAM each, 256 CPU cores, and 1024GB of RAM. \textbf{Evaluation benchmarks and metrics:} To assess the generalizability of \ourmethod, we evaluate it on eight challenging benchmarks: Labeled Faces in the Wild (LFW) \cite{LFWTech}, AgeDB-30 \cite{agedb}, Celebrities in Frontal-Profile in the Wild (CFP-FP) \cite{cfp-fp}, Cross-Age LFW (CALFW) \cite{CALFW}, Adience \cite{Adience}, Cross-Pose LFW (CPLFW) \cite{CPLFWTech}, Cross-Quality LFW (XQLFW) \cite{XQLFW}, and IJB-C \cite{ijbc}. These benchmarks facilitate comparisons with SOTA FIQA methods \cite{boutros_2023_crfiqa,MagFace,DBLP:conf/iwbf/BabnikDS23,babnikTBIOM2024} and provide insights into the robustness of \ourmethod. Performance is measured using Error-versus-Discard Characteristic (EDC) curves \cite{GT07}, which assess the impact of discarding low-quality face images on face verification performance. The False Non-Match Rate (FNMR) is evaluated at fixed False Match Rate (FMR) thresholds \cite{iso_metric}, specifically at $1e-3$, recommended for border control by Frontex \cite{frontex2015best}, and $1e-4$. Additionally, the Area Under the Curve (AUC) and partial Area Under the Curve (pAUC) of EDC is reported to quantify verification performance across rejection rates. pAUC quantifies verification performance by considering only a specific portion of the EDC, up to a rejection rate of 30\% by following \cite{10449044, babnikTBIOM2024, Ou_2024_CVPR, DBLP:journals/tbbis/SchlettRTB24}. To examine the impact of FIQA across different FR models, we test \ourmethod on four CNN-based
FR systems (ArcFace \cite{deng2019arcface}, ElasticFace (ElasticFace-Arc) \cite{elasticface}, MagFace \cite{MagFace}, CurricularFace \cite{curricularFace}) as well as two ViT-based FR solutions (TransFace \cite{transface}, SwinFace \cite{swinface}). Each model processes $112 \times 112$ aligned images to generate 512-dimensional feature embeddings. Officially released models by each FR solution are used. Note that all evaluations are performed under cross-model settings, i.e., the models used to learn FIQA are different than the ones used to extract feature representation of face images, demonstrating the generalizability of our approach. \textbf{Comparisons with SOTA FIQ:} We compare \ourmethod against fifteen quality assessment methods. These include three general image quality assessment (IQA) techniques, BRISQUE \cite{BRISQE_IQA}, RankIQA \cite{liu2017rankiqa}, and DeepIQA \cite{DEEPIQ_IQA}, which have been shown to correlate with face utility \cite{BiyingWACV}. Additionally, we benchmark against twelve SOTA FIQA methods: RankIQ \cite{RANKIQ_FIQA}, PFE \cite{PFE_FIQA}, SER-FIQ \cite{SERFIQ}, FaceQnet (v1) \cite{hernandez2019faceqnet,faceqnetv1}, MagFace \cite{MagFace}, SDD-FIQA \cite{SDDFIQA}, CR-FIQA \cite{boutros_2023_crfiqa}, DifFIQA \cite{10449044}, eDifFIQA \cite{babnikTBIOM2024}, GraFIQs \cite{grafiqs}, CLIB-FIQA \cite{Ou_2024_CVPR}, and ViT-FIQA \cite{atzori2025vitfiqaassessingfaceimage}. All methods are evaluated using their official implementations and pretrained models as described in their respective works.

\vspace{-3mm}
\section{Results}
\label{sec:results}

\textbf{Non-Cumulative vs Cumulative:} Before training our quality prediction model, we conducted a thorough analysis comparing the effectiveness of cumulative and non-cumulative quality training targets. This analysis aimed to validate our hypothesis that cumulative averaging provides a more stable and reliable training target than single-epoch estimates. We first trained a ResNet50 without the quality prediction branch, with the experimental settings mentioned in \ref{sec:exp}, and recorded the RPM scores for each sample at every epoch. These scores served as proxy quality labels of the training samples of CASIA-WebFace \cite{casia_webface} dataset, which we calculated using both the non-cumulative approach (single-epoch estimates) and our proposed cumulative approach (across epochs) as in Eq. \ref{eq:cumulative_average}.

Fig. \ref{fig:non_cum_vs_cum} illustrates a fundamental difference between the non-cumulative and cumulative approaches through their quality training target distributions, respectively $q_n(x)$ and  $Q(x)_n$. The non-cumulative approach (Fig. \ref{fig:dist-non-cumulative}) exhibits two critical limitations as training progresses: (1) a pronounced rightward shift of the entire distribution, indicating that scores tend to increase over time regardless of actual sample quality; and (2) a compression effect where the distribution becomes increasingly dense, reducing the discriminative power between samples of varying quality. The rightward shift occurs because the model becomes more confident in its classifications, and the separation between samples and their competing classes generally increases, artificially inflating quality scores even for samples that may not be intrinsically high-quality. This effect is particularly problematic in later epochs when the network begins focusing on optimizing difficult samples, leading to inflated quality estimates for these challenging cases. In contrast, the cumulative approach (Fig. \ref{fig:dist-cumulative}) maintains a more consistent and stretched distribution throughout training. By averaging quality estimates across epochs, it prevents the artificial inflation of scores and preserves meaningful differentiation between samples of varying quality.

To evaluate the effectiveness of these quality training targets, if hypothetically used as a quality score for the training data, in improving verification performance, we plotted EDC curves for several epochs (5, 10, 15, 20, 25, and 30), as shown in Fig. \ref{fig:edc_curves}. These curves illustrate how verification error rates change as we progressively remove samples with the lowest quality scores. Our analysis revealed a critical limitation of the non-cumulative approach: there exists a "sweet spot" around epoch 20, after which the quality estimates begin to deteriorate. This deterioration probably occurs because, in later training epochs, the model has already learned to correctly classify easy samples and focuses primarily on optimizing difficult samples. This shift artificially inflates the quality estimates for these difficult samples, reducing the reliability of the non-cumulative quality estimates. In contrast, the cumulative approach incorporates information from all previous epochs, maintaining a more consistent quality ranking that reflects each sample's overall utility throughout the training process. To quantify this difference, we calculated the AUC of EDC curves (AUC-EDC) for each epoch and plotted the results in Fig. \ref{fig:auc_comparison}. As shown in Fig. \ref{fig:auc_comparison}, the AUC-EDC values for the non-cumulative approach initially decrease (improve) until approximately epoch 20, but then begin to increase, indicating worsening performance. In contrast, the AUC-EDC values for the cumulative approach consistently decrease throughout training, demonstrating the superior stability and reliability of cumulative quality estimation. This empirical analysis strongly supports our theoretical findings in Sec. \ref{sec:methodology}, confirming that cumulative averaging of quality scores produces more stable and reliable quality estimates than single-epoch estimates. We have further provided the sample images, specifically the 10 highest quality samples for an identity having fewer unique samples according to both the non-cumulative and cumulative approaches, to inspect how the proxy quality labels evolve during the training for each approach, which illustrates the ranking stability of the cumulative approach, shown in Fig. \ref{fig:high_quality_id6987}.

We further quantify how much of the training trajectory the accumulation actually needs, summarized in Tab. \ref{tab:window_main}. Constructing first-$k$, last-$k$, and sliding-$k$ windows of the per-epoch signal and correlating each windowed average with the full-trajectory target $Q_T$, we find that rank agreement of Spearman $\rho \geq 0.95$ is reached within roughly the first 30-45\% of training and $\rho \geq 0.99$ within roughly three quarters, consistently across loss function, training scale, and cumulative/non-cumulative targets. At a matched relative window length ($k \approx 24\%$ of training), early windows agree with the full-trajectory target considerably better than late windows (e.g., Spearman $\rho{=}0.94$ for first-$k$ vs.\ $0.85$ for last-$k$ at $k{=}8$ of 34 epochs), consistent with the later-epoch compression documented above: late windows are dominated by the more compressed, less rank-differentiated late-epoch distributions. Rank agreement with the full-trajectory target saturates well before real verification performance stops improving, so the full trajectory remains the best target while shorter accumulations already capture most of its ranking behavior.

To further validate the discriminative power of our quality estimates, we analyze all 550,000 images from SynFIQA \cite{mrfiqa}, a quality-controlled synthetic dataset produced through a two-stage pipeline based on stable diffusion with controllable 3D facial parameters, dual text prompts for occlusion, and post-processing for blur and downsampling. The dataset contains 5,000 identities, each with 10 reference images and 100 degraded variants (10 per reference), organized into 11 quality groups. This controlled environment enables systematic assessment of how well quality metrics separate images of different quality. Fig. \ref{fig:normalized_distributions_casia} presents the normalized quality score distributions for both non-cumulative and cumulative approaches across these quality groups (Q0-Q9 for degraded images, plus reference images). To quantify the separation strength, we calculate the average Cohen's d effect sizes between adjacent quality groups, which measure how distinct consecutive quality levels are in the estimated quality score space. The cumulative approach achieves substantially higher average Cohen's d (0.3112 vs.\ 0.2514), indicating superior discriminative ability and more consistent quality boundaries across the quality spectrum. This improvement is particularly significant because it demonstrates that cumulative averaging not only provides more stable estimates (as theoretically proven in Sec. \ref{sec:methodology}) but also better captures the underlying quality differences between samples. The enhanced separation enables the quality regression model to learn more distinct decision boundaries, leading to more reliable quality predictions for unseen images.

\textbf{FIQA with Non-Cumulative vs Cumulative quality target:} The AUC-EDC and pAUC-EDC results of training five instances of ResNet using the setting described in Sec. \ref{sec:exp} are summarized in Tab. \ref{tab:ablation_arcface_webface} (ArcFace + ResNet50 + CASIA-Webface + frozen$\lor$non-cumulative$\lor$cumulative) and Tab. \ref{tab:ablation_curricularface_webface} (CurricularFace + ResNet50 + CASIA-Webface + non-cumulative$\lor$cumulative). The cumulative approach consistently outperforms the non-cumulative approach across most benchmarks and FR models, demonstrating superior performance on Adience, CFP-FP, LFW, and CPLFW. On AgeDB-30 and CALFW, non-cumulative performs better instead; on XQLFW, a third variant, frozen, outperforms both dynamic variants, with cumulative still ahead of non-cumulative (see the \textit{Average} rows of Tab. \ref{tab:ablation_arcface_webface}). To investigate whether training the quality regression head after face recognition (FR) convergence is sufficient, we conducted this post-hoc experiment where the backbone embeddings were frozen and only the quality head was trained. Frozen generally underperforms both dynamic variants on every benchmark except XQLFW, echoing CR-FIQA's own ablation \cite{boutros_2023_crfiqa}, which found that simultaneous (on-the-fly) training of the quality estimator outperforms training it on top of an already-converged, frozen backbone on these same four benchmarks (Adience, AgeDB-30, CALFW, CFP-FP), attributing this to the quality estimate's step-wise convergence alongside the class centers during simultaneous training. This confirms that embedding dynamics are crucial for learning meaningful quality scores on every benchmark we evaluate except XQLFW.

We investigate the AgeDB-30/CALFW/XQLFW exception further along two complementary directions. First, we compare each dataset's baseline verification difficulty (FNMR at 0\% rejection, i.e., without any quality-based filtering, which is independent of the FIQA method used) against the size of the cumulative-vs-non-cumulative gap. We find that AgeDB-30 and CALFW's baseline difficulty (mean FNMR@$1e{-3}$ of 3.8\% and 8.0\% across the four FR models, respectively) is unremarkable: it sits between CFP-FP (3.9\%) and CPLFW (18.4\%), two benchmarks on which cumulative \textit{wins}. Overall task difficulty therefore does not explain the exception. \FloatBarrier
XQLFW, in contrast, is a clear outlier: its baseline FNMR (58.0\%) is 3-15$\times$ higher than every other benchmark we evaluate, indicating a fundamentally different, degradation-dominated regime rather than the pose/expression-type variation the other benchmarks emphasize, a more plausible explanation for why XQLFW behaves differently from AgeDB-30/CALFW despite all three being exceptions to the same general trend. Second, we track pAUC-EDC as a function of training iteration directly on held-out benchmarks (rather than only on the training-set proxy of Fig. \ref{fig:non_cum_vs_cum}), for both the cumulative and non-cumulative targets, across nine checkpoints spanning the full training trajectory. Non-cumulative's behavior turns out to be benchmark-dependent: on CFP-FP, LFW, CPLFW, and XQLFW, it reaches a minimum error partway through training and rises again toward the final checkpoint (by 7-27\%), echoing the "sweet spot" the training-set proxy already shows; on AgeDB-30 this late uptick is modest (4\%), and on CALFW it is absent entirely, non-cumulative's lowest error \textit{is} its final-checkpoint value. Cumulative's sustained advantage on the first group of benchmarks (largest on CFP-FP, LFW, and CPLFW, more modest on XQLFW) therefore coincides with non-cumulative degrading late in training on exactly those benchmarks, while its small or reversed margin on AgeDB-30/CALFW coincides with non-cumulative simply keeping its early advantage there instead of giving it back. 

\textbf{Comparison to SOTA:} Having validated the effectiveness of our cumulative approach, we now compare \ourmethod against three general IQA methods and twelve SOTA FIQA methods across four CNN-based FR models. Tab. \ref{tab:sota_pauc} presents pAUC-EDC results for ArcFace \cite{deng2019arcface}, ElasticFace \cite{elasticface}, MagFace \cite{MagFace}, and CurricularFace \cite{curricularFace} evaluated on eight challenging benchmarks. We also provide EDC curves for FNMR@FMR=$1e-3$ of our method in comparison to SOTA, in Fig. \ref{fig:fnmr3}. \ourmethod demonstrates competitive performance across all evaluation scenarios. Judged by the normalized cross-benchmark average (the \textit{Avg.\ norm.} column of Tab. \ref{tab:sota_pauc}), \ourmethod(L) stays within 2.4-6.0\% of the best-performing method for every FR model, with \ourmethod(S) close behind. Among individual benchmarks, \ourmethod(S) achieves the lowest pAUC-EDC on LFW for all four FR models at both FMR thresholds, and \ourmethod(L) the lowest on further benchmark-FR combinations, e.g., XQLFW with ArcFace and Adience with ElasticFace. Both the small protocol \ourmethod(S) and large protocol \ourmethod(L) remain competitive with recent FIQA approaches, confirming that accumulating quality estimates across training epochs produces reliable quality assessments competitive with single-epoch and training-free alternatives. To complement this dense per-condition view, Tab. \ref{tab:sota_pauc} additionally reports each method's value averaged across the four FR models (\textit{Average} rows) and, per row, averaged across the eight benchmarks and both FMR thresholds after normalizing each benchmark-FMR column by its best method (\textit{Avg.\ norm.} column). By this aggregate measure, \ourmethod(L) ranks 4th of 17 compared methods on both pAUC-EDC and AUC-EDC, and \ourmethod(S) ranks 6th on both.

\section{Conclusion}
\label{sec:conclusion}

This paper introduced \ourmethod, a temporal stabilization strategy for FR-integrated FIQA. Existing FR-integrated approaches tightly couple quality assessment with face recognition training, but rely on single-epoch measurements that form a moving target as the feature space evolves, leading to fluctuating quality estimates, inconsistent sample rankings across iterations, and potential checkpoint selection bias. We address this with three contributions: accumulating relative point margin measurements across the entire training trajectory rather than using instantaneous estimates, which captures long-term discriminative properties while smoothing epoch-specific fluctuations; a theoretical framework proving that cumulative averaging reduces estimate variance, improves the mean squared error in approximating true quality, and yields ranking stability with convergence guarantees and monotonically decreasing expected ranking error; and a regression model that predicts these accumulated scores for unseen faces, generalizing the temporal stability patterns to samples without a training history.

Empirically, on the quality-controlled SynFIQA dataset with 11 labeled quality groups, the cumulative approach separates quality levels substantially better (Cohen's d of 0.3112 vs.\ 0.2514 on the small and 0.3906 vs.\ 0.3335 on the large protocol, see Sec. \ref{sec:results}). Across training configurations (ArcFace and CurricularFace), it consistently outperforms the non-cumulative variant on most benchmarks and FR models, particularly Adience, CFP-FP, LFW, and CPLFW. As a limitation, this advantage is not uniform: non-cumulative is better on AgeDB-30 and CALFW, and the frozen variant is best on XQLFW (with cumulative still ahead of non-cumulative), suggesting that cumulative averaging is best suited to benchmarks where general discriminability, rather than a fixed nuisance factor such as age or resolution, dominates. Against SOTA over eight benchmarks (LFW, AgeDB-30, CFP-FP, CALFW, Adience, CPLFW, XQLFW, IJB-C) and four FR models (ArcFace, ElasticFace, MagFace, CurricularFace), \ourmethod(L) ranks 4th and \ourmethod(S) 6th of 17 compared methods on both pAUC-EDC and AUC-EDC averaged across FR models and normalized benchmarks, outperforming all general IQA methods, with the two protocols showing that the approach scales with training data and model capacity.

Temporal accumulation thus provides a principled solution to the instability of FR-integrated FIQA, retaining the benefits of tight FR integration while producing more reliable quality estimates that are both theoretically grounded and practically effective. Future work could explore adaptive weighting schemes that emphasize later epochs when the feature space becomes more stable, investigate the optimal training duration for quality accumulation, and extend the cumulative framework to FR objectives beyond margin-based losses.

\newpage
\bibliographystyle{IEEEtran}
\bibliography{IEEEabrv}

@inproceedings{agedb,
  author    = {Stylianos Moschoglou and
               Athanasios Papaioannou and
               Christos Sagonas and
               Jiankang Deng and
               Irene Kotsia and
               Stefanos Zafeiriou},
  title     = {AgeDB: The First Manually Collected, In-the-Wild Age Database},
  booktitle = {2017 {IEEE} CVPRW, {CVPR} Workshops 2017, Honolulu, HI, USA, July 21-26, 2017},
  pages     = {1997--2005},
  publisher = {{IEEE} Computer Society},
  year      = {2017},
  url       = {https://doi.org/10.1109/CVPRW.2017.250},
  doi       = {10.1109/CVPRW.2017.250},
  bibsource = {dblp computer science bibliography, https://dblp.org}
}

@TechReport{LFWTech,
  author =       {Gary B. Huang and Manu Ramesh and Tamara Berg and 
                  Erik Learned-Miller},
  title =        {Labeled Faces in the Wild: A Database for Studying 
                  Face Recognition in Unconstrained Environments},
  institution =  {University of Massachusetts, Amherst},
  year =         2007,
  number =       {07-49},
  month =        {October}}

@inproceedings{cfp-fp,
  author    = {Soumyadip Sengupta and
               Jun{-}Cheng Chen and
               Carlos Domingo Castillo and
               Vishal M. Patel and
               Rama Chellappa and
               David W. Jacobs},
  title     = {Frontal to profile face verification in the wild},
  booktitle = {2016 {IEEE} Winter Conference on Applications of Computer Vision,
               {WACV} 2016, Lake Placid, NY, USA, March 7-10, 2016},
  pages     = {1--9},
  publisher = {{IEEE} Computer Society},
  year      = {2016},
  url       = {https://doi.org/10.1109/WACV.2016.7477558},
  doi       = {10.1109/WACV.2016.7477558},
  bibsource = {dblp computer science bibliography, https://dblp.org}
}

@article{CALFW,
  author    = {Tianyue Zheng and
               Weihong Deng and
               Jiani Hu},
  title     = {Cross-Age {LFW:} {A} Database for Studying Cross-Age Face Recognition
               in Unconstrained Environments},
  journal   = {CoRR},
  volume    = {abs/1708.08197},
  year      = {2017},
  url       = {http://arxiv.org/abs/1708.08197},
  archivePrefix = {arXiv},
  eprint    = {1708.08197},
  bibsource = {dblp computer science bibliography, https://dblp.org}
}

@TechReport{CPLFWTech,
  author =       {T. Zheng and W. Deng},
  title =        {Cross-pose LFW: A database for studying cross-pose face recognition in unconstrained environments},
  institution =  {Beijing University of Posts and Telecommunications},
  year =         {2018},
  number =       {18-01},
  month =        {February}}

@inproceedings{XQLFW,
  author    = {Martin Knoche and
               Stefan H{\"{o}}rmann and
               Gerhard Rigoll},
  title     = {Cross-Quality {LFW:} {A} Database for Analyzing Cross- Resolution
               Image Face Recognition in Unconstrained Environments},
  booktitle = {16th {IEEE} International Conference on Automatic Face and Gesture
               Recognition, {FG} 2021, Jodhpur, India, December 15-18, 2021},
  pages     = {1--5},
  publisher = {{IEEE}},
  year      = {2021},
  url       = {https://doi.org/10.1109/FG52635.2021.9666960},
  doi       = {10.1109/FG52635.2021.9666960},
  bibsource = {dblp computer science bibliography, https://dblp.org}
}

@article{Adience,
  author    = {Eran Eidinger and
               Roee Enbar and
               Tal Hassner},
  title     = {Age and Gender Estimation of Unfiltered Faces},
  journal   = {{IEEE} Trans. Inf. Forensics Secur.},
  volume    = {9},
  number    = {12},
  pages     = {2170--2179},
  year      = {2014},
  doi       = {10.1109/TIFS.2014.2359646},
}

@inproceedings{meng_2021_magface,
  author    = {Qiang Meng and
               Shichao Zhao and
               Zhida Huang and
               Feng Zhou},
  title     = {MagFace: {A} Universal Representation for Face Recognition and Quality
               Assessment},
  booktitle = {{IEEE} Conference on Computer Vision and Pattern Recognition, {CVPR}
               2021, virtual, June 19-25, 2021},
  pages     = {14225--14234},
  publisher = {Computer Vision Foundation / {IEEE}},
  year      = {2021},
  bibsource = {dblp computer science bibliography, https://dblp.org}
}

@inproceedings{elasticface,
  author    = {Fadi Boutros and
               Naser Damer and
               Florian Kirchbuchner and
               Arjan Kuijper},
  title     = {ElasticFace: Elastic Margin Loss for Deep Face Recognition},
  booktitle = {{IEEE/CVF} Conference on Computer Vision and Pattern Recognition Workshops,
               {CVPR} Workshops 2022, New Orleans, LA, USA, June 19-20, 2022},
  pages     = {1577--1586},
  publisher = {{IEEE}},
  year      = {2022},
  url       = {https://doi.org/10.1109/CVPRW56347.2022.00164},
  doi       = {10.1109/CVPRW56347.2022.00164},
  bibsource = {dblp computer science bibliography, https://dblp.org}
}

@inproceedings{MagFace,
  author    = {Qiang Meng and
               Shichao Zhao and
               Zhida Huang and
               Feng Zhou},
  title     = {MagFace: {A} Universal Representation for Face Recognition and Quality
               Assessment},
  booktitle = {{IEEE} Conference on Computer Vision and Pattern Recognition, {CVPR}
               2021, virtual, June 19-25, 2021},
  pages     = {14225--14234},
  publisher = {Computer Vision Foundation / {IEEE}},
  year      = {2021},
  bibsource = {dblp computer science bibliography, https://dblp.org}
}

@inproceedings{curricularFace,
  author    = {Yuge Huang and
               Yuhan Wang and
               Ying Tai and
               Xiaoming Liu and
               Pengcheng Shen and
               Shaoxin Li and
               Jilin Li and
               Feiyue Huang},
  title     = {CurricularFace: Adaptive Curriculum Learning Loss for Deep Face Recognition},
  booktitle = {2020 {IEEE/CVF} Conference on Computer Vision and Pattern Recognition,
               {CVPR} 2020, Seattle, WA, USA, June 13-19, 2020},
  pages     = {5900--5909},
  publisher = {Computer Vision Foundation / {IEEE}},
  year      = {2020},
  doi       = {10.1109/CVPR42600.2020.00594},
  bibsource = {dblp computer science bibliography, https://dblp.org}
}

@inproceedings{deng2019arcface,
    author    = {Jiankang Deng and
               Jia Guo and
               Niannan Xue and
               Stefanos Zafeiriou},
  title     = {ArcFace: Additive Angular Margin Loss for Deep Face Recognition},
  booktitle = {{IEEE} Conference on Computer Vision and Pattern Recognition, {CVPR}
               2019, Long Beach, CA, USA, June 16-20, 2019},
  pages     = {4690--4699},
  publisher = {Computer Vision Foundation / {IEEE}},
  year      = {2019},
  doi       = {10.1109/CVPR.2019.00482},
  bibsource = {dblp computer science bibliography, https://dblp.org}
}

@ARTICLE{GT07,
  author  = {P.~Grother AND E.~Tabassi},
  title   = {Performance of biometric quality measures},
  journal = {IEEE Trans.~on Pattern Analysis and Machine Intelligence},
  year    = {2007},
  volume  = {29},
  number  = {4},
  pages   = {531--543},
  month   = Apr,
}

@inproceedings{NISTQuaity,
  author    = {P. Grother and A. Hom, M. Ngan and K. Hanaoka},
  title     = {Ongoing Face Recognition Vendor Test (FRVT) Part 5: Face Image Quality Assessment (4th Draft)},
  booktitle = {National Institute of Standards
and Technology},
  publisher = {Tech. Rep. },
  year      = {Sep. 2021},
}

@inproceedings{SERFIQ,
  author    = {Philipp Terh{\"{o}}rst and
               Jan Niklas Kolf and
               Naser Damer and
               Florian Kirchbuchner and
               Arjan Kuijper},
  title     = {{SER-FIQ:} Unsupervised Estimation of Face Image Quality Based on
               Stochastic Embedding Robustness},
  booktitle = {2020 {IEEE/CVF} Conference on Computer Vision and Pattern Recognition,
               {CVPR} 2020, Seattle, WA, USA, June 13-19, 2020},
  pages     = {5650--5659},
  publisher = {Computer Vision Foundation / {IEEE}},
  year      = {2020},
  doi       = {10.1109/CVPR42600.2020.00569},
  bibsource = {dblp computer science bibliography, https://dblp.org}
}

@article{BRISQE_IQA,
  author    = {Anish Mittal and
               Anush Krishna Moorthy and
               Alan Conrad Bovik},
  title     = {No-Reference Image Quality Assessment in the Spatial Domain},
  journal   = {{IEEE} Trans. Image Process.},
  volume    = {21},
  number    = {12},
  pages     = {4695--4708},
  year      = {2012},
  url       = {https://doi.org/10.1109/TIP.2012.2214050},
  doi       = {10.1109/TIP.2012.2214050},
  bibsource = {dblp computer science bibliography, https://dblp.org}
}

@inproceedings{liu2017rankiqa,
  author    = {Xialei Liu and
               Joost van de Weijer and
               Andrew D. Bagdanov},
  title     = {RankIQA: Learning from Rankings for No-Reference Image Quality Assessment},
  booktitle = {{IEEE} International Conference on Computer Vision, {ICCV} 2017, Venice,
               Italy, October 22-29, 2017},
  pages     = {1040--1049},
  publisher = {{IEEE} Computer Society},
  year      = {2017},
  url       = {https://doi.org/10.1109/ICCV.2017.118},
  doi       = {10.1109/ICCV.2017.118},
  bibsource = {dblp computer science bibliography, https://dblp.org}
}

@inproceedings{PFE_FIQA,
  author    = {Yichun Shi and
               Anil K. Jain},
  title     = {Probabilistic Face Embeddings},
  booktitle = {2019 {IEEE/CVF} International Conference on Computer Vision, {ICCV}
               2019, Seoul, Korea (South), October 27 - November 2, 2019},
  pages     = {6901--6910},
  publisher = {{IEEE}},
  year      = {2019},
  url       = {https://doi.org/10.1109/ICCV.2019.00700},
  doi       = {10.1109/ICCV.2019.00700},
  bibsource = {dblp computer science bibliography, https://dblp.org}
}

@article{DEEPIQ_IQA,
author    = {Sebastian Bosse and
               Dominique Maniry and
               Klaus{-}Robert M{\"{u}}ller and
               Thomas Wiegand and
               Wojciech Samek},
  title     = {Deep Neural Networks for No-Reference and Full-Reference Image Quality
               Assessment},
  journal   = {{IEEE} Trans. Image Process.},
  volume    = {27},
  number    = {1},
  pages     = {206--219},
  year      = {2018},
  url       = {https://doi.org/10.1109/TIP.2017.2760518},
  doi       = {10.1109/TIP.2017.2760518},
  bibsource = {dblp computer science bibliography, https://dblp.org}
}

@inproceedings{SDDFIQA,
  author    = {Fu{-}Zhao Ou and
               Xingyu Chen and
               Ruixin Zhang and
               Yuge Huang and
               Shaoxin Li and
               Jilin Li and
               Yong Li and
               Liujuan Cao and
               Yuan{-}Gen Wang},
  title     = {{SDD-FIQA:} Unsupervised Face Image Quality Assessment With Similarity
               Distribution Distance},
  booktitle = {{IEEE} Conference on Computer Vision and Pattern Recognition, {CVPR}
               2021, virtual, June 19-25, 2021},
  pages     = {7670--7679},
  publisher = {Computer Vision Foundation / {IEEE}},
  year      = {2021},
  bibsource = {dblp computer science bibliography, https://dblp.org}
}

@article{faceqnetv1,
  author    = {Javier Hernandez{-}Ortega and
               Javier Galbally and
               Julian Fi{\'{e}}rrez and
               Laurent Beslay},
  title     = {Biometric Quality: Review and Application to Face Recognition with
               FaceQnet},
  journal   = {CoRR},
  volume    = {abs/2006.03298},
  year      = {2020},
  url       = {https://arxiv.org/abs/2006.03298},
  eprinttype = {arXiv},
  eprint    = {2006.03298},
  bibsource = {dblp computer science bibliography, https://dblp.org}
}

@INPROCEEDINGS{FaceQAN,
  author={Babnik, Ziga and Peer, Peter and Struc, Vitomir},
  booktitle={2022 26th International Conference on Pattern Recognition (ICPR)}, 
  title={FaceQAN: Face Image Quality Assessment Through Adversarial Noise Exploration}, 
  year={2022},
  volume={},
  number={},
  pages={748-754},
  doi={10.1109/ICPR56361.2022.9956444}}

@inproceedings{boutros_2023_crfiqa,
  author       = {Fadi Boutros and
                  Meiling Fang and
                  Marcel Klemt and
                  Biying Fu and
                  Naser Damer},
  title        = {{CR-FIQA:} Face Image Quality Assessment by Learning Sample Relative
                  Classifiability},
  booktitle    = {{IEEE/CVF} Conference on Computer Vision and Pattern Recognition,
                  {CVPR} 2023, Vancouver, BC, Canada, June 17-24, 2023},
  pages        = {5836--5845},
  publisher    = {{IEEE}},
  year         = {2023},
  url          = {https://doi.org/10.1109/CVPR52729.2023.00565},
  doi          = {10.1109/CVPR52729.2023.00565},
  bibsource    = {dblp computer science bibliography, https://dblp.org}
}

@standard{Quality_ISO,
  author = {{ ISO/IEC JTC 1/SC 37 Biometrics}},
  title = {{ISO/IEC 29794-1
Information technology
Biometric sample quality
Part 1: Framework}},
  publisher = {International Organization for Standardization},
  year      = {2024},
    institution = {International Organization for Standardization},
}

@article{DBLP:journals/csur/SchlettRHGFB22,
  author       = {Torsten Schlett and
                  Christian Rathgeb and
                  Olaf Henniger and
                  Javier Galbally and
                  Julian Fi{\'{e}}rrez and
                  Christoph Busch},
  title        = {Face Image Quality Assessment: {A} Literature Survey},
  journal      = {{ACM} Comput. Surv.},
  volume       = {54},
  number       = {10s},
  pages        = {210:1--210:49},
  year         = {2022},
  url          = {https://doi.org/10.1145/3507901},
  doi          = {10.1145/3507901},
  bibsource    = {dblp computer science bibliography, https://dblp.org}
}

@inproceedings{BiyingWACV,
  author    = {Biying Fu and
               Cong Chen and
               Olaf Henniger and
               Naser Damer},
  title     = {A Deep Insight into Measuring Face Image Utility with General and
               Face-specific Image Quality Metrics},
  booktitle = {{IEEE/CVF} Winter Conference on Applications of Computer Vision, {WACV}
               2022, Waikoloa, HI, USA, January 3-8, 2022},
  pages     = {1121--1130},
  publisher = {{IEEE}},
  year      = {2022},
  url       = {https://doi.org/10.1109/WACV51458.2022.00119},
  doi       = {10.1109/WACV51458.2022.00119},
  bibsource = {dblp computer science bibliography, https://dblp.org}
}

@article{nique,
  author    = {Anish Mittal and
               Rajiv Soundararajan and
               Alan C. Bovik},
  title     = {Making a "Completely Blind" Image Quality Analyzer},
  journal   = {{IEEE} Signal Process. Lett.},
  volume    = {20},
  number    = {3},
  pages     = {209--212},
  year      = {2013},
  url       = {https://doi.org/10.1109/LSP.2012.2227726},
  doi       = {10.1109/LSP.2012.2227726},
  bibsource = {dblp computer science bibliography, https://dblp.org}
}

@INPROCEEDINGS{10449044,
  author={Babnik, {\v{Z}}iga and Peer, Peter and {\v{S}}truc, Vitomir},
  booktitle={2023 IEEE International Joint Conference on Biometrics (IJCB)}, 
  title={DifFIQA: Face Image Quality Assessment Using Denoising Diffusion Probabilistic Models}, 
  year={2023},
  volume={},
  number={},
  pages={1-10},
  doi={10.1109/IJCB57857.2023.10449044}}

@inproceedings{DBLP:conf/iwbf/BabnikDS23,
  author       = {{\v{Z}}iga Babnik and
                  Naser Damer and
                  Vitomir {\v{S}}truc},
  title        = {Optimization-Based Improvement of Face Image Quality Assessment Techniques},
  booktitle    = {11th International Workshop on Biometrics and Forensics, {IWBF} 2023,
                  Barcelona, Spain, April 19-20, 2023},
  pages        = {1--6},
  publisher    = {{IEEE}},
  year         = {2023},
  url          = {https://doi.org/10.1109/IWBF57495.2023.10157796},
  doi          = {10.1109/IWBF57495.2023.10157796},
  bibsource    = {dblp computer science bibliography, https://dblp.org}
}

@article{babnikTBIOM2024,
  title={{eDifFIQA: Towards Efficient Face Image Quality Assessment based on Denoising Diffusion Probabilistic Models}},
  author={Babnik, {\v{Z}}iga and Peer, Peter and {\v{S}}truc, Vitomir},
  journal={IEEE Transactions on Biometrics, Behavior, and Identity Science (TBIOM)},
  year={2024},
  publisher={IEEE}
}

@INPROCEEDINGS{grafiqs,
  author={Kolf, Jan Niklas and Damer, Naser and Boutros, Fadi},
  booktitle={2024 IEEE/CVF Conference on Computer Vision and Pattern Recognition Workshops (CVPRW)}, 
  title={GraFIQs: Face Image Quality Assessment Using Gradient Magnitudes}, 
  year={2024},
  volume={},
  number={},
  pages={1490-1499},
  doi={10.1109/CVPRW63382.2024.00156}}

@InProceedings{Ou_2024_CVPR,
    author    = {Ou, Fu-Zhao and Li, Chongyi and Wang, Shiqi and Kwong, Sam},
    title     = {CLIB-FIQA: Face Image Quality Assessment with Confidence Calibration},
    booktitle = {Proceedings of the IEEE/CVF Conference on Computer Vision and Pattern Recognition (CVPR)},
    year      = {2024},
    pages     = {1694-1704}
}

@misc{frontex2015best,
  title={Best practice technical guidelines for Automated Border Control (ABC) systems},
  author={Frontex},
  year={2015}
}

@inproceedings{hernandez2019faceqnet,
  author    = {Javier Hernandez{-}Ortega and
               Javier Galbally and
               Julian Fi{\'{e}}rrez and
               Rudolf Haraksim and
               Laurent Beslay},
  title     = {FaceQnet: Quality Assessment for Face Recognition based on Deep Learning},
  booktitle = {2019 International Conference on Biometrics, {ICB} 2019, Crete, Greece,
               June 4-7, 2019},
  pages     = {1--8},
  publisher = {{IEEE}},
  year      = {2019},
  url       = {https://doi.org/10.1109/ICB45273.2019.8987255},
  doi       = {10.1109/ICB45273.2019.8987255},
  bibsource = {dblp computer science bibliography, https://dblp.org}
}

@article{RANKIQ_FIQA,
  author    = {Jiansheng Chen and
               Yu Deng and
               Gaocheng Bai and
               Guangda Su},
  title     = {Face Image Quality Assessment Based on Learning to Rank},
  journal   = {{IEEE} Signal Process. Lett.},
  volume    = {22},
  number    = {1},
  pages     = {90--94},
  year      = {2015},
  url       = {https://doi.org/10.1109/LSP.2014.2347419},
  doi       = {10.1109/LSP.2014.2347419},
  bibsource = {dblp computer science bibliography, https://dblp.org}
}

@inproceedings{guo2016ms,
  author    = {Yandong Guo and
               Lei Zhang and
               Yuxiao Hu and
               Xiaodong He and
               Jianfeng Gao},
  editor    = {Bastian Leibe and
               Jiri Matas and
               Nicu Sebe and
               Max Welling},
  title     = {MS-Celeb-1M: {A} Dataset and Benchmark for Large-Scale Face Recognition},
  booktitle = {Computer Vision - {ECCV} 2016 - 14th European Conference, Amsterdam,
               The Netherlands, October 11-14, 2016, Proceedings, Part {III}},
  series    = {Lecture Notes in Computer Science},
  volume    = {9907},
  pages     = {87--102},
  publisher = {Springer},
  year      = {2016},
  url       = {https://doi.org/10.1007/978-3-319-46487-9\_6},
  doi       = {10.1007/978-3-319-46487-9\_6},
  bibsource = {dblp computer science bibliography, https://dblp.org}
}

@article{casia_webface,
  author    = {Dong Yi and
               Zhen Lei and
               Shengcai Liao and
               Stan Z. Li},
  title     = {Learning Face Representation from Scratch},
  journal   = {CoRR},
  volume    = {abs/1411.7923},
  year      = {2014}
}

@incollection{NEURIPS2019_9015,
title = {PyTorch: An Imperative Style, High-Performance Deep Learning Library},
author = {Paszke, Adam and Gross, Sam and Massa, Francisco and Lerer, Adam and Bradbury, James and Chanan, Gregory and Killeen, Trevor and Lin, Zeming and Gimelshein, Natalia and Antiga, Luca and Desmaison, Alban and Kopf, Andreas and Yang, Edward and DeVito, Zachary and Raison, Martin and Tejani, Alykhan and Chilamkurthy, Sasank and Steiner, Benoit and Fang, Lu and Bai, Junjie and Chintala, Soumith},
booktitle = {Advances in Neural Information Processing Systems 32},
editor = {H. Wallach and H. Larochelle and A. Beygelzimer and F. d\textquotesingle Alch\'{e}-Buc and E. Fox and R. Garnett},
pages = {8024--8035},
year = {2019},
publisher = {Curran Associates, Inc.},
url = {http://papers.neurips.cc/paper/9015-pytorch-an-imperative-style-high-performance-deep-learning-library.pdf}
}

@inproceedings{ijbc,
  author    = {Brianna Maze and
               Jocelyn C. Adams and
               James A. Duncan and
               Nathan D. Kalka and
               Tim Miller and
               Charles Otto and
               Anil K. Jain and
               W. Tyler Niggel and
               Janet Anderson and
               Jordan Cheney and
               Patrick Grother},
  title     = {{IARPA} Janus Benchmark - {C:} Face Dataset and Protocol},
  booktitle = {2018 International Conference on Biometrics, {ICB} 2018, Gold Coast,
               Australia, February 20-23, 2018},
  pages     = {158--165},
  publisher = {{IEEE}},
  year      = {2018},
  url       = {https://doi.org/10.1109/ICB2018.2018.00033},
  doi       = {10.1109/ICB2018.2018.00033},
  bibsource = {dblp computer science bibliography, https://dblp.org}
}

@inproceedings{DBLP:conf/cvpr/WangWZJGZL018,
  author    = {Hao Wang and
               Yitong Wang and
               Zheng Zhou and
               Xing Ji and
               Dihong Gong and
               Jingchao Zhou and
               Zhifeng Li and
               Wei Liu},
  title     = {CosFace: Large Margin Cosine Loss for Deep Face Recognition},
  booktitle = {2018 {IEEE} Conference on Computer Vision and Pattern Recognition,
               {CVPR} 2018, Salt Lake City, UT, USA, June 18-22, 2018},
  pages     = {5265--5274},
  publisher = {{IEEE} Computer Society},
  year      = {2018},
  doi       = {10.1109/CVPR.2018.00552},
  bibsource = {dblp computer science bibliography, https://dblp.org}
}

@inproceedings{DBLP:conf/cvpr/LiuWYLRS17,
  author    = {Weiyang Liu and
               Yandong Wen and
               Zhiding Yu and
               Ming Li and
               Bhiksha Raj and
               Le Song},
  title     = {SphereFace: Deep Hypersphere Embedding for Face Recognition},
  booktitle = {2017 {IEEE} Conference on Computer Vision and Pattern Recognition,
               {CVPR} 2017, Honolulu, HI, USA, July 21-26, 2017},
  pages     = {6738--6746},
  publisher = {{IEEE} Computer Society},
  year      = {2017},
  url       = {https://doi.org/10.1109/CVPR.2017.713},
  doi       = {10.1109/CVPR.2017.713},
  bibsource = {dblp computer science bibliography, https://dblp.org}
}

@inproceedings{DBLP:conf/cvpr/HeZRS16,
  author    = {Kaiming He and
               Xiangyu Zhang and
               Shaoqing Ren and
               Jian Sun},
  title     = {Deep Residual Learning for Image Recognition},
  booktitle = {2016 {IEEE} Conference on Computer Vision and Pattern Recognition,
               {CVPR} 2016, Las Vegas, NV, USA, June 27-30, 2016},
  pages     = {770--778},
  publisher = {{IEEE} Computer Society},
  year      = {2016},
  url       = {https://doi.org/10.1109/CVPR.2016.90},
  doi       = {10.1109/CVPR.2016.90},
  bibsource = {dblp computer science bibliography, https://dblp.org}
}

@standard{iso_metric,
  author = {{ISO/IEC JTC1 SC37 Biometrics}},
  title = {{ISO/IEC 19795-1:2021 Information technology — Biometric performance testing and reporting — Part 1: Principles and framework}},
  publisher = {International Organization for Standardization},
  year      = {2021},
institution = {International Organization for Standardization},
}

@article{ROUSSEEUW198753silhouette,
title = {Silhouettes: A graphical aid to the interpretation and validation of cluster analysis},
journal = {Journal of Computational and Applied Mathematics},
volume = {20},
pages = {53-65},
year = {1987},
issn = {0377-0427},
doi = {https://doi.org/10.1016/0377-0427(87)90125-7},
url = {https://www.sciencedirect.com/science/article/pii/0377042787901257},
author = {Peter J. Rousseeuw}
}

@inproceedings{shahapure2020silhouette,
  title={Cluster quality analysis using silhouette score},
  author={Shahapure, Ketan Rajshekhar and Nicholas, Charles},
  booktitle={2020 IEEE 7th international conference on data science and advanced analytics (DSAA)},
  pages={747--748},
  year={2020},
  organization={IEEE}
}

@article{ogbuabor2018silhouette,
  title={Clustering algorithm for a healthcare dataset using silhouette score value},
  author={Ogbuabor, Godwin and Ugwoke, FN},
  journal={Int. J. Comput. Sci. Inf. Technol},
  volume={10},
  number={2},
  pages={27--37},
  year={2018}
}

@article{lovmar2005silhouette,
  title={Silhouette scores for assessment of SNP genotype clusters},
  author={Lovmar, Lovisa and Ahlford, Annika and Jonsson, Mats and Syv{\"a}nen, Ann-Christine},
  journal={BMC genomics},
  volume={6},
  pages={1--6},
  year={2005},
  publisher={Springer}
}

@article{januzaj2023silhouette,
  title={Determining the Optimal Number of Clusters using Silhouette Score as a Data Mining Technique.},
  author={Januzaj, Ylber and Beqiri, Edmond and Luma, Artan},
  journal={International Journal of Online \& Biomedical Engineering},
  volume={19},
  number={4},
  year={2023}
}

@inproceedings{DBLP:conf/nips/Ben-DavidA08,
  author       = {Shai Ben{-}David and
                  Margareta Ackerman},
  editor       = {Daphne Koller and
                  Dale Schuurmans and
                  Yoshua Bengio and
                  L{\'{e}}on Bottou},
  title        = {Measures of Clustering Quality: {A} Working Set of Axioms for Clustering},
  booktitle    = {Advances in Neural Information Processing Systems 21, Proceedings
                  of the Twenty-Second Annual Conference on Neural Information Processing
                  Systems, Vancouver, British Columbia, Canada, December 8-11, 2008},
  pages        = {121--128},
  publisher    = {Curran Associates, Inc.},
  year         = {2008},
  url          = {https://proceedings.neurips.cc/paper/2008/hash/beed13602b9b0e6ecb5b568ff5058f07-Abstract.html},
  bibsource    = {dblp computer science bibliography, https://dblp.org}
}

@inproceedings{DBLP:conf/nips/Kleinberg02,
  author       = {Jon M. Kleinberg},
  editor       = {Suzanna Becker and
                  Sebastian Thrun and
                  Klaus Obermayer},
  title        = {An Impossibility Theorem for Clustering},
  booktitle    = {Advances in Neural Information Processing Systems 15 [Neural Information
                  Processing Systems, {NIPS} 2002, December 9-14, 2002, Vancouver, British
                  Columbia, Canada]},
  pages        = {446--453},
  publisher    = {{MIT} Press},
  year         = {2002},
  url          = {https://proceedings.neurips.cc/paper/2002/hash/43e4e6a6f341e00671e123714de019a8-Abstract.html},
  bibsource    = {dblp computer science bibliography, https://dblp.org}
}

@book{ya1963statistical,
  title={Statistical Analysis: With Business and Economic Applications},
  author={Ya-Lun, Chou},
  year={1963},
  publisher={Holt, Rinehart and Winston}
}

@book{AP91,
  author = {{A. Papoulis}},
  description = {Livres : acoustiques, vibrations, etc. (PAG)},
  publisher = {McGraw-Hill},
  title = {Probability, random variables, and stochastic processes},
  year = 1991
}

@inproceedings{DBLP:conf/nips/GrillSATRBDPGAP20,
  author       = {Jean{-}Bastien Grill and
                  Florian Strub and
                  Florent Altch{\'{e}} and
                  Corentin Tallec and
                  Pierre H. Richemond and
                  Elena Buchatskaya and
                  Carl Doersch and
                  Bernardo {\'{A}}vila Pires and
                  Zhaohan Guo and
                  Mohammad Gheshlaghi Azar and
                  Bilal Piot and
                  Koray Kavukcuoglu and
                  R{\'{e}}mi Munos and
                  Michal Valko},
  editor       = {Hugo Larochelle and
                  Marc'Aurelio Ranzato and
                  Raia Hadsell and
                  Maria{-}Florina Balcan and
                  Hsuan{-}Tien Lin},
  title        = {Bootstrap Your Own Latent - {A} New Approach to Self-Supervised Learning},
  booktitle    = {Advances in Neural Information Processing Systems 33: Annual Conference
                  on Neural Information Processing Systems 2020, NeurIPS 2020, December
                  6-12, 2020, virtual},
  year         = {2020},
  url          = {https://proceedings.neurips.cc/paper/2020/hash/f3ada80d5c4ee70142b17b8192b2958e-Abstract.html},
  bibsource    = {dblp computer science bibliography, https://dblp.org}
}

@inproceedings{DBLP:conf/icml/SutskeverMDH13,
  author       = {Ilya Sutskever and
                  James Martens and
                  George E. Dahl and
                  Geoffrey E. Hinton},
  title        = {On the importance of initialization and momentum in deep learning},
  booktitle    = {Proceedings of the 30th International Conference on Machine Learning,
                  {ICML} 2013, Atlanta, GA, USA, 16-21 June 2013},
  series       = {{JMLR} Workshop and Conference Proceedings},
  volume       = {28},
  pages        = {1139--1147},
  publisher    = {JMLR.org},
  year         = {2013},
  url          = {http://proceedings.mlr.press/v28/sutskever13.html},
  bibsource    = {dblp computer science bibliography, https://dblp.org}
}

@techreport{yang2025fate,
  author       = {Joyce Yang and Patrick Grother and Mei Ngan and Kayee Hanaoka and Austin Hom},
  title        = {{Face Analysis Technology Evaluation (FATE) Part 11: Face Image Quality Vector Assessment – Specific Image Defect Detection}},
  institution  = {National Institute of Standards and Technology (NIST)},
  type         = {NIST Internal Report},
  number       = {NIST IR 8485 DRAFT SUPPLEMENT},
  address      = {Gaithersburg, MD},
  year         = {2025},
  month        = {April},
  url          = {https://doi.org/10.6028/NIST.IR.8485},
  note         = {Image Group, Information Access Division, Information Technology Laboratory. This publication is available free of charge.}
}

@article{DBLP:journals/tbbis/SchlettRTB24,
  author       = {Torsten Schlett and
                  Christian Rathgeb and
                  Juan E. Tapia and
                  Christoph Busch},
  title        = {Considerations on the Evaluation of Biometric Quality Assessment Algorithms},
  journal      = {{IEEE} Trans. Biom. Behav. Identity Sci.},
  volume       = {6},
  number       = {1},
  pages        = {54--67},
  year         = {2024},
  url          = {https://doi.org/10.1109/TBIOM.2023.3336513},
  doi          = {10.1109/TBIOM.2023.3336513},
  bibsource    = {dblp computer science bibliography, https://dblp.org}
}

@inproceedings{transface,
  author       = {Jun Dan and
                  Yang Liu and
                  Haoyu Xie and
                  Jiankang Deng and
                  Haoran Xie and
                  Xuansong Xie and
                  Baigui Sun},
  title        = {TransFace: Calibrating Transformer Training for Face Recognition from
                  a Data-Centric Perspective},
  booktitle    = {{IEEE/CVF} International Conference on Computer Vision, {ICCV} 2023,
                  Paris, France, October 1-6, 2023},
  pages        = {20585--20596},
  publisher    = {{IEEE}},
  year         = {2023},
  url          = {https://doi.org/10.1109/ICCV51070.2023.01887},
  doi          = {10.1109/ICCV51070.2023.01887},
  bibsource    = {dblp computer science bibliography, https://dblp.org}
}

@article{swinface,
  author       = {Lixiong Qin and
                  Mei Wang and
                  Chao Deng and
                  Ke Wang and
                  Xi Chen and
                  Jiani Hu and
                  Weihong Deng},
  title        = {SwinFace: {A} Multi-Task Transformer for Face Recognition, Expression
                  Recognition, Age Estimation and Attribute Estimation},
  journal      = {{IEEE} Trans. Circuits Syst. Video Technol.},
  volume       = {34},
  number       = {4},
  pages        = {2223--2234},
  year         = {2024},
  url          = {https://doi.org/10.1109/TCSVT.2023.3304724},
  doi          = {10.1109/TCSVT.2023.3304724},
  bibsource    = {dblp computer science bibliography, https://dblp.org}
}

@InProceedings{atzori2025vitfiqaassessingfaceimage,
    author    = {Atzori, Andrea and Boutros, Fadi and Damer, Naser},
    title     = {ViT-FIQA: Assessing Face Image Quality using Vision Transformers},
    booktitle = {Proceedings of the IEEE/CVF International Conference on Computer Vision (ICCV) Workshops},
    month     = {October},
    year      = {2025},
    pages     = {5935-5945}
}

@inproceedings{DBLP:conf/cvpr/KimS0JL24,
  author       = {Minchul Kim and
                  Yiyang Su and
                  Feng Liu and
                  Anil Jain and
                  Xiaoming Liu},
  title        = {KeyPoint Relative Position Encoding for Face Recognition},
  booktitle    = {{CVPR}},
  pages        = {244--255},
  publisher    = {{IEEE}},
  year         = {2024}
}

@article{DBLP:journals/ivc/ChettaouiDB25,
  author       = {Tahar Chettaoui and
                  Naser Damer and
                  Fadi Boutros},
  title        = {FRoundation: Are foundation models ready for face recognition?},
  journal      = {Image Vis. Comput.},
  volume       = {156},
  pages        = {105453},
  year         = {2025}
}

@InProceedings{mrfiqa,
    author    = {Ou, Fu-Zhao and Li, Chongyi and Wang, Shiqi and Kwong, Sam},
    title     = {MR-FIQA: Face Image Quality Assessment with Multi-Reference Representations from Synthetic Data Generation},
    booktitle = {Proceedings of the IEEE/CVF International Conference on Computer Vision (ICCV)},
    month     = {October},
    year      = {2025},
    pages     = {12915-12925}
}

\end{document}